%% file: neurips_2026.tex
\documentclass{article}

\usepackage[preprint]{neurips_2026}

\usepackage[utf8]{inputenc}
\usepackage[T1]{fontenc}
\usepackage{hyperref}
\usepackage{url}
\usepackage{booktabs}
\usepackage{amsfonts}
\usepackage{nicefrac}
\usepackage{microtype}
\usepackage{xcolor}

\usepackage{amsmath}
\usepackage{amsthm}
\theoremstyle{plain}
\newtheorem{theorem}{Theorem}
\newtheorem{proposition}{Proposition}

\theoremstyle{definition}

\usepackage{algorithm,algorithmicx,algpseudocode}
\usepackage[table]{xcolor}
\usepackage{comment}
\usepackage[font=small]{caption}
\usepackage{multirow}
\usepackage{tabularx}
\usepackage{threeparttable}
\usepackage{pifont}
\newcommand{\cmark}{\ding{51}}
\newcommand{\xmark}{\ding{53}}
\usepackage{graphicx}
\usepackage{subcaption}
\usepackage{wrapfig}
\usepackage{titletoc}
\usepackage{array}
\usepackage{placeins}

\title{FiTS: Interpretable Spiking Neurons \\ via Frequency Selectivity and Temporal Shaping}

\author{
  Jongmin Choi \quad Joon Son Chung \\
  Korea Advanced Institute of Science and Technology (KAIST)\\
}

\begin{document}

\maketitle

\begin{abstract}
\input{sections/abstract}
\end{abstract}

\input{sections/introduction}
\input{sections/related_work}
\input{sections/methods}
\input{sections/experiments}
\input{sections/discussions}
\input{sections/conclusion}

\clearpage

\bibliography{shortstrings,bibliography}
\bibliographystyle{plain}

\clearpage
\input{sections/appendix}

\end{document}

%% file: sections/abstract.tex
Spiking Neural Networks (SNNs) are a promising framework for event-driven temporal processing. Prior work has improved temporal modeling through richer neuron dynamics and network-level mechanisms such as recurrence and delays, but it remains unclear how individual spiking neurons should specialize within a network. In this work, we introduce \textbf{FiTS}, a spiking neuron that factorizes temporal computation within each neuron into Frequency Selectivity (FS) and Temporal Shaping (TS). The FS module parameterizes each neuron's target frequency as the maximizer of its subthreshold magnitude response, while the TS module reshapes when frequency components contribute to membrane voltage accumulation through group-delay modulation. On auditory benchmarks where frequency selectivity and timing are central to the input structure, FiTS consistently improves over a plain Leaky Integrate-and-Fire (LIF) baseline in simple feedforward SNNs without recurrence or network-level delays, while remaining competitive with strong temporal SNN baselines. Beyond accuracy, the learned target frequencies and group-delay shifts provide interpretable neuron-level summaries of the frequency and timing organization learned within the network.

%% file: sections/introduction.tex
\section{Introduction}

Spiking Neural Networks (SNNs) are widely regarded as the third generation of neural networks~\cite{MAASS19971659}. They represent information through spike trains over time, and their event-driven sparse computation makes them a promising framework for efficient processing of temporally structured data such as audio and event streams, particularly on neuromorphic hardware~\cite{8259423, gallego2020event, Roy2019TowardsSM}. Although recent work has substantially improved SNN performance on temporal tasks~\cite{baronig2024advancing, chen2024pmsn, fabre2025structured, hammouamri2024learning, huber2024scaling, queant2025delrec}, it remains unclear how to parameterize the temporal role of an individual spiking neuron within a network.

Prior work has improved temporal modeling along two main directions. At the neuron level, adaptive and resonant dynamics enrich temporal responses~\cite{baronig2024advancing, fabre2025structured, huang2024prf, huber2024scaling, IZHIKEVICH2001883, zhang2025lif}. At the network level, recurrence, learnable synaptic delays, and broader architectural inductive biases strengthen temporal representations~\cite{10.3389/fnins.2022.865897, deckers2024co, hammouamri2024learning, queant2025delrec}. These advances are effective, but they often leave each neuron's frequency preference and response timing implicit in learned coefficients or network mechanisms.

In this work, we introduce FiTS, a spiking neuron that makes these roles explicit by factorizing neuron-level temporal computation into Frequency Selectivity (FS) and Temporal Shaping (TS). Motivated by intrinsic neuronal resonance, where neurons respond selectively to inputs at preferred frequencies~\cite{HUTCHEON2000216}, the FS module parameterizes each neuron's target frequency as the maximizer of its subthreshold magnitude response. The TS module uses group-delay modulation within the neuron to reshape when frequency components contribute to pre-spike membrane voltage accumulation. After training, these parameters can be read as parameter-level summaries of the learned frequency and timing organization. Our main contributions are summarized as follows.

\begin{itemize}
\item We introduce the FS module, which maps a response-level target frequency to the adaptation strength through a closed-form inverse, enabling frequency-domain initialization, learning, and post-training interpretation within the same coordinate.
\item We introduce the TS module, which reshapes pre-spike membrane voltage accumulation through group-delay modulation. Unlike network-level delays that change when emitted spikes reach downstream neurons, the TS module controls when frequency-specific input responses contribute to spike generation within a neuron.
\item We show that our FiTS neuron consistently improves simple feedforward SNNs across auditory benchmarks and yields learned target frequencies and group-delay shifts that summarize the network's learned organization of frequency and timing information.
\end{itemize}

%% file: sections/related_work.tex
\section{Related Work}
Early work on training deep SNNs addressed the differentiability problem caused by spike generation and reset using surrogate gradients~\cite{NEURIPS2018_c203d8a1, NEURIPS2018_82f2b308, zenke2018superspike}. Subsequent methods improved temporal learning by reshaping training objectives, simplifying backpropagation through time, or facilitating temporal gradient propagation~\cite{deng2022temporal, guo2024take, meng2023towards, xiao2022online}. Neuron-pathway designs also improve long-range credit assignment or temporal-gradient flow~\cite{fang2023parallel, huang2024clif, zhang2024tc}. These methods improve how SNNs are optimized through time, but they do not directly parameterize a neuron's frequency preference or response timing.

Another line of work improves temporal processing through network-level mechanisms. Recurrent spiking architectures with adaptive neurons extend effective memory across long temporal sequences~\cite{NEURIPS2018_c203d8a1, 10.3389/fnins.2022.865897, yin2021accurate}, while learnable synaptic or axonal delays adjust spike arrival times to improve temporal alignment in feedforward or recurrent architectures~\cite{deckers2024co, hammouamri2024learning, queant2025delrec, sun2022axonal}. More recently, delay variables have also been introduced inside neuron state dynamics to provide each neuron with access to a finite temporal input history~\cite{karilanova2025delays}. Frequency-domain attention modules such as FSTA-SNN~\cite{yu2025fsta} reweight spike feature maps using DCT-based spatial-temporal attention, but operate as feature-level modules rather than intrinsic neuron-level frequency-response parameterizations. Thus, these methods improve temporal routing, memory, or feature reweighting, but they do not directly parameterize the group-delay structure by which frequency-specific subthreshold responses accumulate before spike generation.

A parallel direction enriches single-neuron temporal dynamics. Adaptive LIF-type neurons, learnable time-constant variants, and gated LIF models introduce richer timescales or adaptation dynamics~\cite{baronig2024advancing, fang2021incorporating, yao2022glif, zhang2025lif}. Resonate-and-fire and structured state-space spiking models make oscillatory dynamics or frequency-sensitive regimes more accessible to learning~\cite{fabre2025structured, 10.5555/3692070.3692805, huang2024prf, huber2024scaling, IZHIKEVICH2001883, shen2025spikingssms}. Structural neuron models further expand temporal computation through multiple branches, compartments, or parallel internal pathways~\cite{chen2024pmsn, wang-etal-2025-mmdend, zhang2026dendritic, zheng2024dendritic}. When frequency preference appears in these models, it is typically an implicit consequence of learned dynamical or state-transition parameters, rather than a response-level target frequency directly exposed to optimization. Classical impedance-based analyses characterize subthreshold resonance by the input frequency that maximizes the membrane-voltage response for a given neuron model~\cite{doi:10.1152/jn.00955.2002}. FiTS builds on this view by making the frequency that maximizes the magnitude response a learnable target in the FS module. It further introduces the TS module to modulate group delay, shaping when frequency-specific responses contribute to membrane voltage before threshold crossing.

%% file: sections/methods.tex
\section{Methods}
\label{sec:methods}

FiTS is a spiking neuron with two explicit mechanisms for temporal computation within each neuron. The Frequency Selectivity (FS) module parameterizes the nonzero target frequency that maximizes a neuron's subthreshold magnitude response. The Temporal Shaping (TS) module reshapes when frequency components contribute to pre-spike membrane voltage accumulation through group-delay modulation. We first present the FS module in Section~\ref{sec:fs_module}, and then build on it with the TS module in Section~\ref{sec:ts_module}.

\subsection{FS Module: Learnable Frequency Selectivity}
\label{sec:fs_module}

The standard LIF neuron follows first-order leaky membrane voltage dynamics:
\begin{equation}
\dot V(t) = -\mu V(t)+I(t), \quad \mu:=\frac{1}{\tau_m},
\label{eq:lif}
\end{equation}
where $\tau_m$ is the membrane time constant and $I(t)$ is the input current. In the spiking model, spikes are emitted when the voltage crosses the threshold $V_\mathrm{th}$, followed by a reset.

To introduce frequency-selective subthreshold dynamics, we use a LIF neuron with a voltage-dependent adaptation current, a form used in adaptive and resonant spiking neuron models~\cite{baronig2024advancing, huber2024scaling, IZHIKEVICH2001883}.
\begin{equation}
\dot V(t) = -\mu V(t)+I(t)-\eta a(t), \quad \dot a(t) = -\rho a(t)+\gamma V(t), \quad \rho:=\frac{1}{\tau_a},
\label{eq:adapt_lif}
\end{equation}
where $a(t)$ denotes the adaptation current, $\tau_a$ is the adaptation time constant, $\gamma\!\geq\!0$ is the coupling of voltage to adaptation, and $\eta\!\geq\!0$ is the feedback from adaptation to voltage. We use the subthreshold response of the system in~\eqref{eq:adapt_lif} to define an explicit target frequency for each neuron.

\subsubsection{Learnable Frequency Selectivity in Continuous Time}
\label{sec:fs_ct_design}

Under zero initial conditions, applying the Laplace transform to~\eqref{eq:adapt_lif} yields the subthreshold frequency response
\begin{equation}
H(j\Omega)=\frac{V(j\Omega)}{I(j\Omega)}=\frac{\rho+j\Omega}{(\mu\rho+\kappa-\Omega^2)+j(\mu+\rho)\Omega}=|H(j\Omega)|e^{j\phi(\Omega)}.
\label{eq:frequency_response}
\end{equation}

Here, $\kappa\!:=\!\eta\gamma$, $|H(j\Omega)|$ denotes the magnitude response, and $\phi(\Omega)$ denotes the phase response. When $|H(j\Omega)|$ has a unique nonzero global maximizer, we define the corresponding frequency as the \emph{target frequency} $\Omega^\star$. This response-level definition is distinct from a pole-based intrinsic frequency, as discussed in Appendix~\ref{app:target_vs_pole}.

\vspace{+0.8em}
\begin{theorem}[Exact target frequency parameterization in continuous time]
\label{thm:target_frequency}
Consider the subthreshold frequency response~\eqref{eq:frequency_response} with $\mu,\rho\!>\!0$. Whenever $|H(j\Omega)|$ admits a unique nonzero global maximizer over $\Omega\!>\!0$, it is given by
\begin{equation}
\Omega^\star = \sqrt{\sqrt{\kappa(2\rho^2+2\rho\mu+\kappa)}-\rho^2}.
\label{eq:omega_star_forward}
\end{equation}
Conversely, for any desired target frequency $\Omega^\star\!>\!0$, there exists a unique $\kappa^\star\!>\!0$ such that $|H(j\Omega)|$ is maximized at $\Omega^\star$. The corresponding value is
\begin{equation}
\kappa^\star = \rho(\rho+\mu)\!\left[\sqrt{1+\frac{\left(1+(\Omega^\star/\rho)^2\right)^2}{(1+\mu/\rho)^2}}-1\right].
\label{eq:kappa_inverse}
\end{equation}
\end{theorem}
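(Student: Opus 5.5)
Our plan is to work with the squared magnitude as a function of $x:=\Omega^2\ge 0$, since maximizing $|H(j\Omega)|$ over $\Omega>0$ is the same as maximizing
\[
G(x)=\frac{\rho^2+x}{(\mu\rho+\kappa-x)^2+(\mu+\rho)^2x}=\frac{\rho^2+x}{D(x)}
\]
over $x>0$, and $x\mapsto\sqrt{x}$ is a monotone bijection of $(0,\infty)$. Write $D(x)=x^2+bx+c^2$ with $c:=\mu\rho+\kappa$ and $b:=(\mu+\rho)^2-2c=\mu^2+\rho^2-2\kappa$.

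The first step is the stationarity condition. The numerator of $G'(x)$ is $N(x)=D(x)-(\rho^2+x)D'(x)$, and after cancelling the $x^2$ terms we get
\[
N(x)=-x^2-2\rho^2x+(c^2-b\rho^2).
\]
This is a downward parabola, so it has at most one positive root, namely $x^\star=-\rho^2+\sqrt{\rho^4+c^2-b\rho^2}$. That root exists exactly when $c^2>b\rho^2$. Moreover $G'>0$ on $(0,x^\star)$ and $G'<0$ on $(x^\star,\infty)$. Hence whenever a nonzero global maximizer exists, it must be this $x^\star$: any interior maximizer is a critical point, and $G\to0$ as $x\to\infty$.

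The second step, which we expect to be the main obstacle, is routine algebra showing that the discriminant simplifies:
\[
\rho^4+c^2-b\rho^2=\kappa^2+2\kappa\rho\mu+2\kappa\rho^2=\kappa(2\rho^2+2\rho\mu+\kappa).
\]
To verify this, expand $c^2=\mu^2\rho^2+2\mu\rho\kappa+\kappa^2$ and $b\rho^2=\mu^2\rho^2+\rho^4-2\kappa\rho^2$; the $\rho^4$ and $\mu^2\rho^2$ terms cancel. Taking the square root of $x^\star$ then gives \eqref{eq:omega_star_forward}. The positivity condition $x^\star>0$ becomes $\kappa^2+2\kappa\rho(\rho+\mu)>\rho^4$, which forces $\kappa>0$.

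The converse follows by inverting this relation. Set $s:=(\Omega^\star)^2+\rho^2>\rho^2$. We need $\kappa^2+2\kappa\rho(\rho+\mu)=s^2$, a quadratic in $\kappa$ with product of roots $-s^2<0$. It therefore has exactly one positive root:
\[
\kappa^\star=-\rho(\rho+\mu)+\sqrt{\rho^2(\rho+\mu)^2+s^2}.
\]
Factoring out $\rho(\rho+\mu)$ gives exactly \eqref{eq:kappa_inverse}, since $s/(\rho(\rho+\mu))=(1+(\Omega^\star/\rho)^2)/(1+\mu/\rho)$.

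It remains to check that for this $\kappa^\star$ the value $\Omega^\star$ really is the unique global maximizer. We have $(x^\star+\rho^2)^2=s^2>\rho^4$, so $x^\star>0$ is a positive root of $N$. The sign pattern of $G'$ from the first step then makes $x^\star$ the strict global maximizer on $(0,\infty)$, and indeed on $[0,\infty)$, since $G$ is increasing on $(0,x^\star)$. Uniqueness of $\kappa^\star$ follows because the map $\kappa\mapsto x^\star(\kappa)$ is strictly increasing on the admissible range. Any $\kappa$ yielding maximizer $\Omega^\star$ must satisfy the same quadratic, which has only one positive root.
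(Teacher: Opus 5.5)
Your proposal is correct and follows essentially the same route as the paper: substitute $x=\Omega^2$, reduce the quotient-rule numerator to $-x^2-2\rho^2x+\kappa(2\rho^2+2\rho\mu+\kappa)-\rho^4$, and use its sign change at $x^\star$ to get the unique global maximizer. The converse is likewise obtained from the quadratic $\kappa^2+2\rho(\rho+\mu)\kappa-(x^\star+\rho^2)^2=0$ and its single positive root. One small wording fix: being a ``downward parabola'' does not by itself give at most one positive root; what does is the vertex at $x=-\rho^2<0$, or equivalently your explicit root formula, which is how the paper phrases it.
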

\vspace{-0.8em}
A proof is provided in Appendix~\ref{app:fs_ct_proof}.

The inverse mapping is the key parameterization of the FS module. If $\kappa$ were learned directly, the realized target frequency would remain an implicit property of the learned dynamics and would have to be recovered after training. FiTS instead uses $\Omega^\star$ as the trainable coordinate and computes the corresponding $\kappa^\star$ through~\eqref{eq:kappa_inverse}. This makes the target frequency the common quantity used for initialization, optimization, and post-training interpretation.

The standard LIF dynamics in~\eqref{eq:lif} are recovered from~\eqref{eq:adapt_lif} by setting $\eta\!=\!0$, or equivalently $\kappa\!=\!0$. In that case, the magnitude response is maximized at $\Omega\!=\!0$, restricting the neuron to low-pass behavior. Thus, without the voltage-dependent adaptation current, the standard LIF neuron cannot realize a nonzero target frequency.

\subsubsection{Discrete-Time Implementation and Realized Discrete-Time Target Frequency}
\label{sec:fs_dt_realized}

Although the FS module is parameterized in continuous time for analytical tractability, practical training and inference are performed in discrete time. We therefore distinguish between the continuous-time target frequency used for parameterization and the realized discrete-time target frequency induced by the implemented update.

Let $\Delta t\!>\!0$ denote the simulation timestep and $k$ the discrete-time index. We write $V[k]$ for the carried post-reset voltage, $a[k]$ for the adaptation current, and $I[k]$ for the input at timestep $k$. Let $V_0[k+1]$ denote the pre-reset voltage of the FS module at timestep $k+1$.

We discretize~\eqref{eq:adapt_lif} using a semi-implicit Euler discretization. A comparison with a fully explicit Euler discretization is given in Appendix~\ref{app:semi_vs_explicit_euler}. The resulting update is

\vspace{-0.6em}
\begin{equation}
\begin{aligned}
V_0[k+1] &= (1-\mu\Delta t)V[k]-\eta\Delta t\,a[k]+I[k],\\
a[k+1] &= (1-\rho\Delta t)a[k]+\gamma\Delta t\,V_0[k+1].
\end{aligned}
\label{eq:fs_updates_semiimplicit}
\end{equation}

Under the subthreshold regime and zero initial conditions, applying the $z$-transform to~\eqref{eq:fs_updates_semiimplicit} and evaluating on the unit circle yields the discrete-time frequency response $H_d(e^{j\omega})$. When its magnitude response has a unique nonzero global maximizer, we denote the corresponding angular frequency by $\omega^\star_{\mathrm{DT}}$ and refer to it as the realized discrete-time target frequency.

The semi-implicit Euler update approximates the continuous-time dynamics. Thus, the realized discrete-time target $\omega^\star_{\mathrm{DT}}$ need not exactly match the continuous-time target $\Omega^\star$ in~\eqref{eq:omega_star_forward}. We compare them in common frequency units using $f^\star_{\mathrm{CT}}=\Omega^\star/(2\pi)$ and $f^\star_{\mathrm{DT}}=\omega^\star_{\mathrm{DT}}/(2\pi\Delta t)$. Consequently, we use $\Omega^\star$ for learning because it gives an explicit closed-form inverse to the adaptation strength and remains close to the realized discrete-time target when $\Delta t$ is sufficiently small. Section~\ref{sec:disc_ctdt} and Appendix~\ref{app:fs_ct_dt_map} analyze this gap.

\subsection{TS Module: Learnable Temporal Shaping}
\label{sec:ts_module}

While the FS module determines which temporal frequencies are emphasized, spike generation also depends critically on when frequency components contribute to membrane voltage accumulation. The phase response and corresponding group delay of $H_d(e^{j\omega})$ are
\begin{equation}
\phi_\mathrm{FS}(\omega):=\arg H_d(e^{j\omega}), \qquad \tau_\mathrm{FS}(\omega):=-\frac{d\phi_\mathrm{FS}(\omega)}{d\omega},
\label{eq:fs_phase_group_delay}
\end{equation}
where $\tau_\mathrm{FS}(\omega)$ is measured in samples and indicates how much frequency components near $\omega$ are delayed by the FS module.

Once the FS module parameters are fixed, the induced delay pattern is fixed as well. Thus, the neuron cannot tune frequency selectivity and threshold-crossing timing separately. The TS module introduces a complementary mechanism for temporal shaping.

\subsubsection{All-Pass Filter Cascade for Group-Delay Shaping}
\label{sec:ts_allpass_control}
To reshape the group-delay pattern induced by the FS module, we introduce an auxiliary discrete-time all-pass cascade. All-pass filters preserve magnitude while modifying phase, making them natural building blocks for temporal shaping. Specifically, we use an $M$-stage cascade of first-order all-pass filters:
\begin{equation}
A_m(z)=\frac{z^{-1}-\beta_m}{1-\beta_m z^{-1}}, \qquad A^{(M)}(z)=\prod_{m=1}^{M}A_m(z), \qquad |\beta_m|<1
\label{eq:ap_cascade}
\end{equation}
where each stage has unit magnitude on the unit circle, and thus so does the full cascade.

Applying the inverse $z$-transform to $A_m(z)$ gives the recurrence

\vspace{-0.6em}
\begin{equation}
V_m[k+1]=\beta_m\bigl(V_m[k]-V_{m-1}[k+1]\bigr)+V_{m-1}[k], \qquad m=1,\dots,M,
\label{eq:ap_recurrence}
\end{equation}
 
where $V_{m-1}$ and $V_m$ denote the input and output voltages of the $m$-th stage, respectively. Applying this recurrence stage by stage yields the cascade, with $V_0[k+1]$ as its input.

The total group delay of the cascade is the sum of the stage-wise delays. For $|\beta_m|\!<\!1$, each stage contributes a strictly positive group delay for all $\omega$. Consequently, pure cascade composition only augments the group delay induced by the FS module, rather than reshaping it more generally.

\subsubsection{Group-Delay Modulation via \texorpdfstring{$\lambda$}{lambda}-Mixing}
\label{sec:ts_lambda_mixing}

To enable richer group-delay modulation beyond pure all-pass cascade composition, the TS module recursively mixes the original FS module pathway with the outputs of the all-pass stages. With $\widetilde V_0[k+1]\!\equiv\!V_0[k+1]$, we define
\begin{equation}
\widetilde V_m[k+1]=(1-\lambda_m)\widetilde V_{m-1}[k+1]+\lambda_m V_m[k+1], \qquad m=1,\dots,M,
\label{eq:ts_nested_mix}
\end{equation}
where $\lambda_m\!\in\![0,1]$ controls the contribution of the $m$-th all-pass stage. The final mixed output $\widetilde V_M[k+1]$ serves as the effective pre-reset voltage, as illustrated in Figure~\ref{fig:ts_module}.

\begin{figure}[t]
    \centering
    \includegraphics[width=1.00\linewidth]{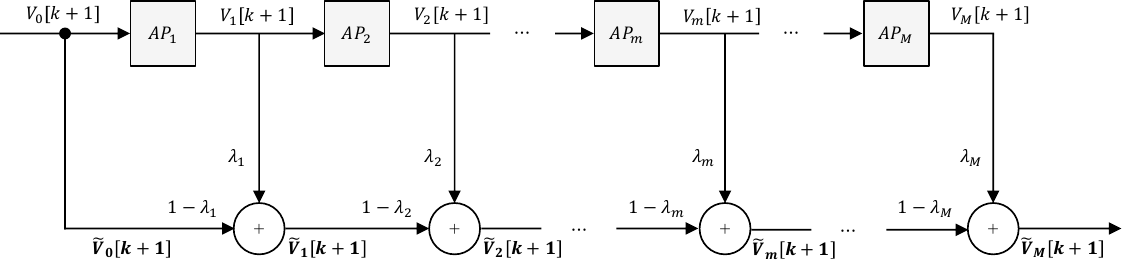}
    \caption{\textbf{TS module.} The FS output $V_0$ passes through an $M$-stage all-pass cascade and is recursively mixed with stage outputs using learnable $\lambda_m$ parameters. Each $\mathrm{AP}_m$ block denotes a first-order all-pass filter.}
    \label{fig:ts_module}
\end{figure}

The update in~\eqref{eq:ts_nested_mix} forms a convex mixture of the direct pathway and all-pass stage outputs, so the induced response remains magnitude-bounded and does not introduce arbitrary amplification. However, because the phase of a mixture is not a mixture of phases, the induced group delay need not remain a positive sum of the constituent all-pass delays. This allows $\lambda$-mixing to realize group-delay behavior beyond pure all-pass cascade composition.

\vspace{+0.6em}
\begin{proposition}[Negative group-delay shift under $\lambda$-mixing]
\label{prop:neg_delay}
In the single-stage case, $\lambda$-mixing can induce a negative group-delay shift, which is impossible under pure all-pass cascade composition.
\end{proposition}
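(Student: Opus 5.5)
Take $M=1$. The TS output is $\widetilde V_1 = G(z)\,V_0$ with
\[
G(z)=(1-\lambda)+\lambda A_1(z), \qquad A_1(z)=\frac{z^{-1}-\beta}{1-\beta z^{-1}}.
\]
Since $\phi_{\mathrm{TS}}=\phi_{\mathrm{FS}}+\arg G(e^{j\omega})$, the group-delay shift relative to the FS module is $\Delta\tau(\omega):=-\tfrac{d}{d\omega}\arg G(e^{j\omega})$, which does not depend on the FS parameters. Everything therefore reduces to analysing the scalar function $G$. The argument has three steps.

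\textbf{Step 1: pure all-pass composition cannot give a negative shift.} Compute the delay of one stage. With $\beta\in(-1,1)$ real, a standard computation gives
\[
\tau_{A}(\omega)=\frac{1-\beta^2}{1-2\beta\cos\omega+\beta^2}>0 \quad\text{for all }\omega.
\]
Group delays add under cascading, so any cascade $A^{(M)}$ has strictly positive delay. Hence composing with all-pass stages only increases $\tau_{\mathrm{FS}}$, and no negative shift is possible that way.

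\textbf{Step 2: exhibit a negative shift under $\lambda$-mixing.} Write $G(e^{j\omega})=\tfrac{N(e^{j\omega})}{1-\beta e^{-j\omega}}$ with
\[
N(z)=(1-\lambda)(1-\beta z^{-1})+\lambda(z^{-1}-\beta)=\bigl(1-\lambda-\lambda\beta\bigr)+\bigl(\lambda-(1-\lambda)\beta\bigr)z^{-1}.
\]
Then
\[
\Delta\tau(\omega)=\tau_N(\omega)-\tau_D(\omega), \qquad D(z)=1-\beta z^{-1}.
\]
For a first-order FIR $c_0+c_1z^{-1}$ the delay is
\[
\frac{c_1^2+c_0c_1\cos\omega}{c_0^2+c_1^2+2c_0c_1\cos\omega}.
\]
The delay of $1/D$ equals minus that of $D$. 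This gives a closed form for $\Delta\tau(\omega)$.

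The simplest witness is at $\omega=0$. A short calculation, or the known identity, gives
\[
\tau_G(0)=\frac{\lambda\,\tau_A(0)\,A(1)}{(1-\lambda)+\lambda A(1)}, \qquad A(1)=\frac{1-\beta}{1-\beta}\cdot(-1)=-1,
\]
since $A_1(1)=(1-\beta)/(1-\beta)$ with the sign fixed by $z^{-1}-\beta$ versus $1-\beta z^{-1}$. Concretely, $A_1(e^{j0})=1$. To avoid a sign slip, I would instead evaluate at $\omega=\pi$, where $A_1(-1)=-1$. There
\[
G(-1)=1-2\lambda,\qquad \tau_G(\pi)=\frac{\lambda\,\tau_A(\pi)\cdot(-1)}{1-2\lambda}\ \text{(real-part formula)}.
\]
This is negative for $\lambda<1/2$, with $\tau_A(\pi)=\tfrac{1-\beta}{1+\beta}>0$.

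The general tool is the identity $\tau_G=\operatorname{Re}\!\bigl(\lambda A'\text{-term}/G\bigr)$, that is, $\tau_G(\omega)=\operatorname{Re}\!\left[\frac{\lambda\,\tau_A(\omega)A(e^{j\omega})}{G(e^{j\omega})}\right]$. This follows from $-\tfrac{d}{d\omega}\arg G=\operatorname{Re}\!\bigl(j\,G'(\omega)/G\bigr)$ together with $jA'(\omega)/A=\tau_A$. At $\omega=\pi$ all quantities are real. So I would fix a concrete choice, for example $\beta=0$ (a pure unit delay, $A=z^{-1}$) and $\lambda=1/4$, and verify directly from the FIR formula:
\[
G=\tfrac34+\tfrac14 z^{-1},\qquad \tau_G(\pi)=\frac{c_1^2-c_0c_1}{(c_0-c_1)^2}=\frac{1/16-3/16}{1/4}=-\tfrac12<0.
\]
This is the explicit witness. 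A variant with $|\beta|<1$, $\beta\neq0$, follows by continuity.

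\textbf{Step 3 and the main obstacle.} The main risk is sign and convention bookkeeping: the direction of the phase derivative, and the value of $A_1$ at $\omega=0$ versus $\omega=\pi$. I would therefore anchor the claim on the transparent $\beta=0$ FIR example and check it numerically. Two remarks finish the argument. First, since $\Delta\tau$ is independent of $H_d$, the shift applies to any FS configuration. Second, the shift is negative precisely where $\operatorname{Re}[A/G]<0$, i.e.\ near frequencies where the direct and all-pass paths partially cancel. This contrasts with Step 1 and proves the proposition.
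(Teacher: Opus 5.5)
Your proof is correct, and its mechanism matches the paper's. The paper also isolates $G_1=(1-\lambda_1)+\lambda_1A_1$ and differentiates its phase. Your identity $\tau_G=\lambda\tau_A\operatorname{Re}[A/G]$, with $\operatorname{Re}[A\bar G]=\lambda+(1-\lambda)\cos\phi_{A_1}$, expands to exactly the paper's closed form $\tau_{G_1}=\lambda_1\bigl(\lambda_1+(1-\lambda_1)\cos\phi_{A_1}\bigr)\tau_{A_1}/|G_1|^2$.

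The difference is the endgame. The paper reads off the exact sign condition: $\tau_{G_1}(\omega)<0$ iff $\lambda_1<\tfrac12$ and $\cos\phi_{A_1}(\omega)<-\lambda_1/(1-\lambda_1)$, which is equivalent to your closing criterion $\operatorname{Re}[A/G]<0$. That condition locates the negative-delay band, but the paper leaves implicit that the cosine condition is ever met. Your evaluation at $\omega=\pi$, where $A_1(-1)=-1$ for every $\beta$, settles this. It gives $\tau_G(\pi)=-\lambda(1-\beta)/\bigl((1+\beta)(1-2\lambda)\bigr)<0$ for all $\lambda<\tfrac12$ and $\beta\in(-1,1)$. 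Your explicit $\tau_A$ also gives strict, not merely nonnegative, positivity on the all-pass side.

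In the write-up, delete the $\omega=0$ false start. In fact $A_1(1)=1$, not $-1$, so $\tau_G(0)=\lambda\tau_A(0)>0$ and $\omega=0$ is never a witness.

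Note also that the additivity $\phi_{\mathrm{TS}}=\phi_{\mathrm{FS}}+\arg G$, and hence your remark that the shift is independent of the FS parameters, rests on treating TS as a post-filter. The paper's proof uses that same reading. However, in Algorithm~\ref{alg:FiTS_update} the mixed voltage $\widetilde V_M[k+1]$ is fed back as the carried voltage $V[k+1]$. The claim is therefore safest stated as a property of $G$ itself rather than of the closed-loop neuron.
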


The exact condition and proof are provided in Appendix~\ref{app:ts_negative_delay_proof}. Since every $M$-stage TS module contains the single-stage case by setting $\lambda_2\!=\!\cdots\!=\!\lambda_M\!=\!0$, this possibility extends to any $M\!\geq\!1$.

More generally, the TS module recovers the FS-only model as a special case while enabling learnable group-delay shaping through the parameters $\{\beta_m,\lambda_m\}_{m=1}^{M}$. With only $2M$ additional neuron-wise parameters, it provides a compact mechanism for temporal shaping within the neuron. To satisfy the required constraints during optimization, we parameterize $\beta_m$ with hyperbolic tangent and $\lambda_m$ with a sigmoid. The full discrete-time update is summarized as Algorithm~\ref{alg:FiTS_update} in Appendix~\ref{app:algorithm}.

%% file: sections/experiments.tex
\section{Experiments}

\subsection{Datasets}
\label{sec:datasets}

We evaluate FiTS on three auditory classification datasets: two spike-based benchmarks, Spiking Heidelberg Digits (SHD) and Spiking Speech Commands (SSC), and one non-spiking benchmark, Google Speech Commands (GSC).

SHD and SSC were introduced for consistent quantitative comparisons of spiking neural networks and share a common audio-to-spike conversion pipeline~\cite{cramer2020heidelberg}. SHD is a 20-class spiking speech benchmark derived from the Heidelberg Digits corpus, with spoken digits in English and German. Unlike SSC and GSC, SHD has no official validation split. We therefore report two SHD protocols in Table~\ref{tab:main_shd_ssc}. One follows the commonly used SHD protocol that reports the best test accuracy over training only for comparability with prior work. The other holds out 20\% of the training set for validation and reports test accuracy at the best validation epoch, following the validation-based SHD reporting used in SE-adLIF~\cite{baronig2024advancing} and DelRec~\cite{queant2025delrec}.

SSC is a 35-class spiking keyword-spotting benchmark derived from Google Speech Commands v0.02, with 75,466, 9,981, and 20,382 examples in the train, validation, and test splits, respectively. We also evaluate FiTS on Google Speech Commands (GSC) v0.02~\cite{warden2018speech}, a non-spiking keyword-spotting benchmark built from one-second spoken commands. For GSC, we use conventional acoustic features rather than spike-based inputs.

\input{tables/tab_main_shd_ssc}

\subsection{Experimental Setup}
\label{sec:exp_setup}

\paragraph{Input representation.}
For SHD and SSC, we follow the standard preprocessing used in recent SNN benchmarks~\cite{baronig2024advancing, deckers2024co, fabre2025structured, hammouamri2024learning, huber2024scaling, queant2025delrec}. We spatially bin the 700 afferent channels in groups of five, yielding 140 input features. Each 1.0\,s sample is discretized into $T\!=\!250$ time bins with $\Delta t\!=\!4$\,ms. For GSC, we use 40-band log-Mel spectrograms with $T\!=\!100$ acoustic frames per utterance.

\paragraph{Model configuration.}
We evaluate FiTS in a feedforward fully connected architecture with two hidden layers and a linear classifier, without explicit recurrent connections or network-level delay mechanisms, so that its contribution can be isolated at the neuron level. We use a fixed firing threshold $V_{\mathrm{th}}\!=\!1.0$ for all benchmarks and $\eta\!=\!\gamma$. We report results for multiple hidden widths and TS cascade orders $M\!\in\!\{0,1,2\}$, where $M\!=\!0$ denotes the FS-only model. For each layer, target frequencies are initialized on a log scale.

\paragraph{Training details.}
We train all FiTS and LIF-based ablation models for 100 epochs with cross-entropy loss, Adam~\cite{kingma20153rd}, cosine annealing~\cite{loshchilov2017sgdr}, a piecewise-linear triangular surrogate gradient~\cite{NEURIPS2018_c203d8a1, doi:10.1073/pnas.1604850113}, and subtractive soft reset. We use a batch size of 128 and do not use weight decay, normalization layers, learnable thresholds, auxiliary losses, or data augmentation. For our experiments, we report the mean and standard deviation across 5 random seeds. Baseline results not marked as ours are taken from the corresponding original papers. All our experiments use a single NVIDIA RTX A5000 GPU. Appendix~\ref{app:setup} reports the remaining neuron constants, selected hyperparameters, and training times.

\subsection{Main Results}
\input{tables/tab_main_gsc}

Tables~\ref{tab:main_shd_ssc} and~\ref{tab:main_gsc} summarize the main results. Overall, FiTS performs strongly in the constrained feedforward setting without recurrence or network-level delays. On SHD, FiTS achieves $95.31 \pm 0.21$\% under the best-test protocol and $94.38 \pm 0.12$\% under the 20\% validation-split protocol. On SSC and GSC, it achieves $78.23 \pm 0.16$\% and $94.48 \pm 0.12$\%, respectively.

On SHD, FiTS is competitive under both reporting protocols. Under the best-test protocol, it remains competitive with recurrent and delay-based baselines. Under the 20\% validation-split protocol, FiTS obtains the highest mean accuracy among the validation-based SHD reports in Table~\ref{tab:main_shd_ssc}. This result is obtained with fewer parameters and without recurrence or network-level delays, indicating that neuron-level frequency selectivity and temporal shaping are effective on SHD.

On SSC, FiTS remains competitive under the same feedforward setting without recurrence or network-level delays. It improves over plain LIF, TC-LIF, and cAdLIF, while remaining below stronger feedforward baselines such as DH-SFNN and C-SiLIF. These results support the usefulness of FiTS's neuron-level temporal parameterization on SSC, although stronger neuron designs, including learnable timestep parameterizations, remain important for closing the gap.

On GSC, FiTS improves over the plain LIF baseline from $80.22 \pm 0.16$\% to $94.48 \pm 0.12$\% under the same feedforward setting without recurrence or network-level delays, while remaining close to stronger adaptive baselines. This result shows that FiTS can also be applied to conventional acoustic features, not only spike-based auditory inputs.

%% file: tables/tab_main_shd_ssc.tex
\begin{table*}[t]
\centering
\footnotesize
\setlength{\tabcolsep}{10pt}
\renewcommand{\arraystretch}{1.08}
\begin{threeparttable}
\caption{\textbf{Performance comparison on spiking auditory benchmarks.}
Results are reported on SHD and SSC. We indicate whether each model uses explicit recurrent connections (Rec.) or explicit network-level delay mechanisms (Del.), together with temporal resolution (Res.), number of parameters, and top-1 accuracy. Results not marked as ours are taken from the corresponding original papers.}
\label{tab:main_shd_ssc}
\begin{tabular}{c l c c c c c}
\toprule
\textbf{Dataset} & \textbf{Method} & \textbf{Rec.} & \textbf{Del.} & \textbf{Res.} & \textbf{\# Params} & \textbf{Acc.} \\
\midrule
\multirow{10}{*}[-1.5ex]{SHD}
& RadLIF~\cite{10.3389/fnins.2022.865897}            & \cmark & \xmark & 10\,ms   & 3.9\,M   & 94.62\% \\
& S5-RF~\cite{huber2024scaling}                      & \cmark & \xmark & 4\,ms    & 0.2\,M   & 91.86\% \\
& SE-adLIF~\cite{baronig2024advancing}               & \cmark & \xmark & 4\,ms    & 0.45\,M  & 95.81 $\pm$ 0.56\% \\
& DCLS-Delays (3L-2KC)~\cite{hammouamri2024learning} & \xmark & \cmark & 10\,ms   & 0.2\,M   & 95.07 $\pm$ 0.24\% \\
& DelRec (rec. delays only)~\cite{queant2025delrec}  & \cmark & \cmark & 10\,ms   & 0.17\,M  & 93.39 $\pm$ 0.45\% \\
\cmidrule(lr){2-7}
& Plain LIF (ours)                                   & \xmark & \xmark & 4\,ms    & 0.04\,M  & 81.74 $\pm$ 0.76\% \\
& TC-LIF~\cite{zhang2024tc}                          & \xmark & \xmark & 5.6\,ms  & 0.11\,M  & 83.08\% \\
& cAdLIF~\cite{deckers2024co}                        & \xmark & \xmark & 10\,ms   & 0.04\,M  & 94.19\% \\
& DH-SFNN~\cite{zheng2024dendritic}                             & \xmark & \xmark & 1\,ms    & 0.05\,M  & 92.1\% \\
& C-SiLIF~\cite{fabre2025structured}                 & \xmark & \xmark & 4\,ms    & 0.35\,M  & 89.30 $\pm$ 1.20\% \\
& \cellcolor{gray!15}\textbf{FiTS (ours)}            & \cellcolor{gray!15}\xmark & \cellcolor{gray!15}\xmark & \cellcolor{gray!15}\textbf{4\,ms} & \cellcolor{gray!15}\textbf{0.04\,M} & \cellcolor{gray!15}\textbf{95.31 $\pm$ 0.21\%} \\
\midrule
\multirow{4}{*}[-0.3ex]{SHD$^\ast$}
& SE-adLIF~\cite{baronig2024advancing}               & \cmark & \xmark & 4\,ms    & 0.45\,M  & 93.79 $\pm$ 0.76\% \\
& DelRec (rec. delays only)~\cite{queant2025delrec}  & \cmark & \cmark & 10\,ms   & 0.24\,M  & 93.73 $\pm$ 0.69\% \\
\cmidrule(lr){2-7}
& Plain LIF (ours)                                   & \xmark & \xmark & 4\,ms    & 0.11\,M  & 79.34 $\pm$ 0.67\% \\
& \cellcolor{gray!15}\textbf{FiTS (ours)}            & \cellcolor{gray!15}\xmark & \cellcolor{gray!15}\xmark & \cellcolor{gray!15}\textbf{4\,ms} & \cellcolor{gray!15}\textbf{0.11\,M} & \cellcolor{gray!15}\textbf{94.38 $\pm$ 0.12\%} \\
\midrule
\multirow{11}{*}[-1.5ex]{SSC}
& RadLIF~\cite{10.3389/fnins.2022.865897}            & \cmark & \xmark & 10\,ms   & 3.9\,M   & 77.4\% \\
& S5-RF~\cite{huber2024scaling}                      & \cmark & \xmark & 4\,ms    & 1.75\,M  & 78.8\% \\
& SE-adLIF~\cite{baronig2024advancing}               & \cmark & \xmark & 4\,ms    & 1.6\,M   & 80.44 $\pm$ 0.26\% \\
& DCLS-Delays (3L-2KC)~\cite{hammouamri2024learning} & \xmark & \cmark & 10\,ms   & 2.5\,M   & 80.69 $\pm$ 0.21\% \\
& DelRec (rec. delays only)~\cite{queant2025delrec}  & \cmark & \cmark & 5.6\,ms  & 0.37\,M  & 82.58 $\pm$ 0.08\% \\
\cmidrule(lr){2-7}
& Plain LIF (ours)                                   & \xmark & \xmark & 4\,ms    & 0.35\,M  & 69.17 $\pm$ 0.18\% \\
& TC-LIF~\cite{zhang2024tc}                          & \xmark & \xmark & 5.6\,ms  & 0.11\,M  & 63.46\% \\
& cAdLIF~\cite{deckers2024co}                        & \xmark & \xmark & 10\,ms   & 0.35\,M  & 77.5\% \\
& DH-SFNN~\cite{zheng2024dendritic}                             & \xmark & \xmark & 1\,ms    & 0.27\,M  & 81.03\% \\
& C-SiLIF$^\dagger$~\cite{fabre2025structured}                 & \xmark & \xmark & 4\,ms    & 0.35\,M  & 81.59 $\pm$ 0.31\% \\
& \cellcolor{gray!15}\textbf{FiTS (ours)}            & \cellcolor{gray!15}\xmark & \cellcolor{gray!15}\xmark & \cellcolor{gray!15}\textbf{4\,ms} & \cellcolor{gray!15}\textbf{0.35\,M} & \cellcolor{gray!15}\textbf{78.23 $\pm$ 0.16\%} \\
\bottomrule
\end{tabular}
\begin{tablenotes}[flushleft]
\scriptsize
\item[$\ast$] SHD$^\ast$ uses a 20\% hold-out validation split from the SHD training set for model selection. Test accuracy is reported at the best validation epoch.
\item[$\dagger$] Uses a learnable neuron-wise discretization timestep instead of the standard shared timestep tied to the input-bin resolution.
\end{tablenotes}
\end{threeparttable}
\end{table*}

%% file: tables/tab_main_gsc.tex
\begin{table*}[t]
\centering
\footnotesize
\setlength{\tabcolsep}{10.5pt}
\renewcommand{\arraystretch}{1.08}
\begin{threeparttable}
\caption{\textbf{Performance comparison on GSC with non-spiking acoustic inputs.}
We report whether each model uses explicit recurrent connections (Rec.) or explicit network-level delay mechanisms (Del.), together with input frequency resolution (\# Mel), parameter count, and top-1 accuracy. All entries use a 10\,ms frame hop with approximately 100 timesteps. Results not marked as ours are taken from the corresponding original papers.}
\label{tab:main_gsc}
\begin{tabular}{c l c c c c c}
\toprule
\textbf{Dataset} & \textbf{Method} & \textbf{Rec.} & \textbf{Del.} & \textbf{\# Mel} & \textbf{\# Params} & \textbf{Acc.} \\
\midrule
\multirow{6}{*}[-1.5ex]{GSC}
& RadLIF~\cite{10.3389/fnins.2022.865897}               & \cmark & \xmark & 40  & 0.83\,M & 94.51\% \\
& DCLS-Delays (3L-2KC)~\cite{hammouamri2024learning}    & \xmark & \cmark & 140 & 2.5\,M  & 95.35 $\pm$ 0.04\% \\
\cmidrule(lr){2-7}
& Plain LIF (ours)                                      & \xmark & \xmark & 40  & 0.30\,M & 80.22 $\pm$ 0.16\% \\
& cAdLIF~\cite{deckers2024co}                           & \xmark & \xmark & 40  & 0.30\,M & 94.67\% \\
& SiLIF$^\dagger$~\cite{fabre2025structured}            & \xmark & \xmark & 40  & 1.1\,M  & 95.49 $\pm$ 0.09\% \\
& \cellcolor{gray!15}\textbf{FiTS (ours)}               & \cellcolor{gray!15}\xmark & \cellcolor{gray!15}\xmark & \cellcolor{gray!15}\textbf{40} & \cellcolor{gray!15}\textbf{0.30\,M} & \cellcolor{gray!15}\textbf{94.48 $\pm$ 0.12\%} \\
\bottomrule
\end{tabular}
\begin{tablenotes}[flushleft]
\scriptsize
\item[$\dagger$] Uses a learnable neuron-wise discretization timestep instead of the standard shared timestep tied to the input-bin resolution.
\end{tablenotes}
\end{threeparttable}
\end{table*}

%% file: sections/discussions.tex
\section{Discussion}
\label{sec:discussion}

\subsection{FiTS learns layer-dependent frequency and timing organization}
\label{sec:disc_learned_org}

\begin{figure}[!b]
    \centering
    \resizebox{\linewidth}{!}{%
        \begin{tabular}{@{}c@{\hspace{0.02\linewidth}}c@{}}
            \includegraphics{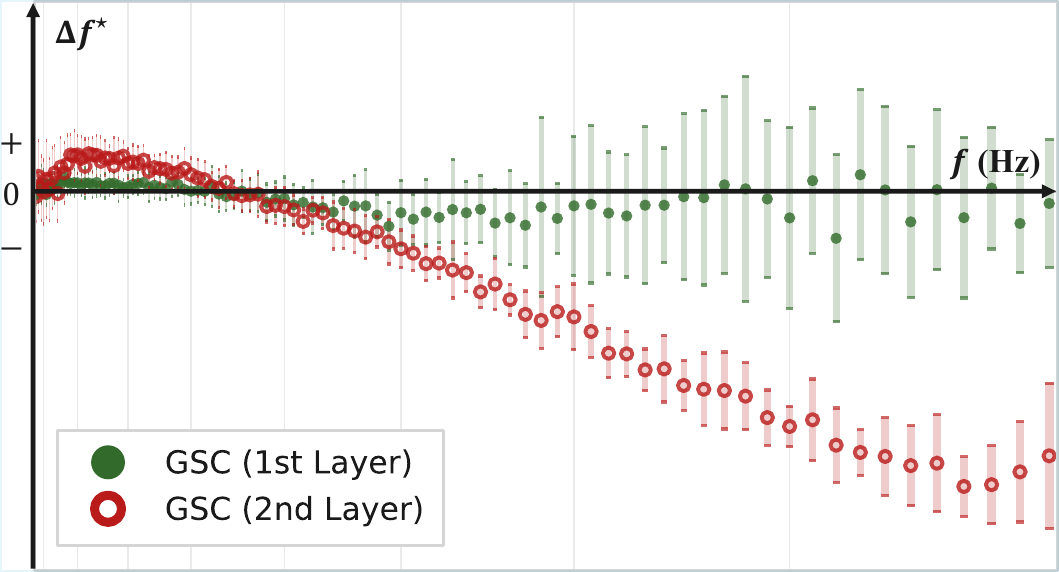} &
            \includegraphics{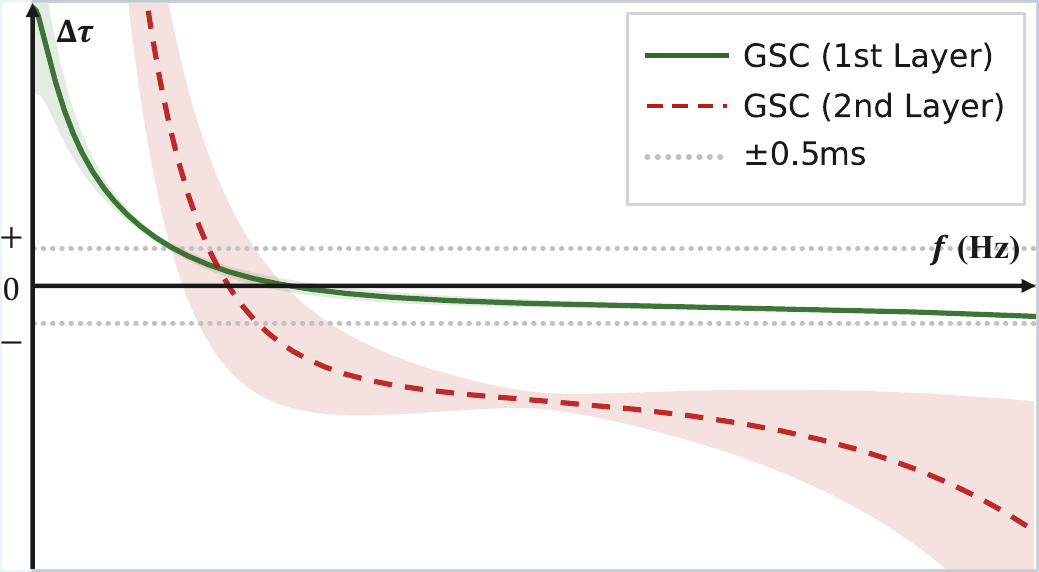}
        \end{tabular}%
    }
    \caption{\textbf{Layer-dependent organization learned by FiTS on GSC.} Left: target frequency shifts learned by the FS module across random seeds. Right: group-delay shifts learned by the TS module across random seeds.}
    \label{fig:disc_learned_org}
\end{figure}

Figure~\ref{fig:disc_learned_org} summarizes learned shifts in target frequency $\Delta f^\star$ and group delay $\Delta\tau$ across 20 random seeds. In both cases, the learned parameters exhibit clear layer-dependent organization after training.

The left panel of Figure~\ref{fig:disc_learned_org} shows how target frequencies shift under the FS module. Both layers are initialized from the same log-scale distribution of target frequencies, but their target frequencies reorganize differently after training. Across random seeds, the second layer shifts more strongly toward lower target frequencies than the first, showing that the target frequencies do not simply remain near initialization. Appendix~\ref{app:target_freq_perturb} probes these targets with reset and shuffle perturbations.

The right panel of Figure~\ref{fig:disc_learned_org} shows the learned group-delay shift induced by the TS module. In both layers, the learned delay shift is frequency-dependent and sign-indefinite, but its magnitude and shape differ across layers. Across random seeds, the second layer learns stronger and more structured group-delay shifts than the first. This suggests that the learned timing parameters summarize layer-dependent temporal organization at the parameter level. The learned distributions of $\beta$ and $\lambda$ are provided in Appendix~\ref{app:beta_lambda_dist}.

\subsection{Ablation highlights complementary contributions of FS and TS}
\label{sec:disc_ablation}

\input{tables/tab_abl_ssc}

Table~\ref{tab:ssc_hidden_size} shows that the largest gain comes from log-scale initialization of nonzero response-level target frequencies, enabled by the inverse map in Theorem~\ref{thm:target_frequency}. The adaptation-only baseline with zero-initialized adaptation shows no consistent improvement over plain LIF, whereas frozen target frequencies initialized on a log scale using~\eqref{eq:kappa_inverse} yield a significant improvement. Making these frequencies learnable through the FS module further improves accuracy at every tested hidden width. Thus, \eqref{eq:kappa_inverse} supports both learning and frequency-domain initialization by allowing the initial target frequency range to be set directly based on the simulation timestep.

On SSC, the TS module provides a consistent additional improvement over FS-only. For both $M\!=\!1$ and $M\!=\!2$, adding TS improves accuracy at every tested hidden width. Since the TS module adds only $2M$ learnable parameters per neuron on top of the FS module, the gain is not driven by a large increase in learnable parameters. Appendix~\ref{app:ssc_cascade_order_sweep} extends the SSC sweep to $M\!=\!3,4,5$, where larger cascade orders remain above FS-only without yielding monotonic gains.

\input{tables/tab_theoretical_energy}

We next characterize the accuracy--computation trade-off relative to plain LIF. We compute theoretical event-driven energy from standard AC/MAC counts using the 45\,nm CMOS costs of Horowitz~\cite{6757323}. The estimates use $E_{\mathrm{AC}}\!=\!0.9\,\mathrm{pJ}$, $E_{\mathrm{MAC}}\!=\!4.6\,\mathrm{pJ}$, and mean firing rates measured on the SSC validation split. They should be interpreted as theoretical cost estimates rather than measured hardware energy. Table~\ref{tab:ssc_energy_breakdown_w512} shows that FiTS reduces the spike-driven linear-layer cost relative to plain LIF by lowering firing rates, but shifts computation into neuron-internal updates. At hidden width 512, FS+TS ($M\!=\!2$) reduces the linear cost from $6.1$ to $2.7\,\mu\mathrm{J}$, while increasing the neuron-internal cost from $2.8$ to $17.6\,\mu\mathrm{J}$. The total estimated cost increases from $8.9$ to $20.3\,\mu\mathrm{J}$. This should be interpreted as an explicit accuracy--computation trade-off. FS provides a moderate-cost improvement over plain LIF, whereas TS trades additional neuron-internal computation for extra timing capacity.

\subsection{CT-to-DT fidelity of the FS parameterization}
\label{sec:disc_ctdt}

\begin{wrapfigure}{r}{0.38\linewidth}
\vspace{-12pt}
\centering
\captionsetup{justification=centering}
\caption{\textbf{CT vs. DT frequency.}}
\includegraphics[width=\linewidth]{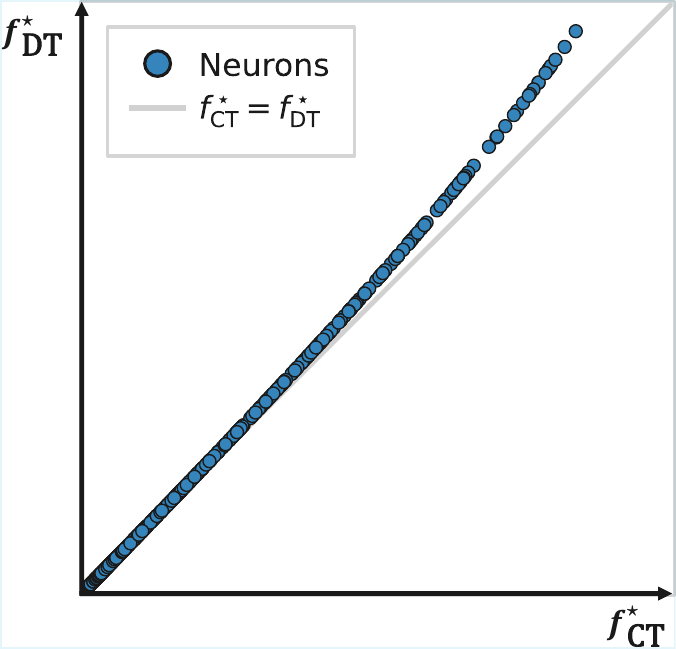}
\label{fig:disc_ctdt_fidelity}
\vspace{-25pt}
\end{wrapfigure}

The FS module is parameterized by a continuous-time target frequency, but training and inference are carried out in discrete time through semi-implicit Euler discretization. A natural question is therefore whether the learned continuous-time target frequency remains a meaningful parameter in the implemented discrete-time neuron. To measure this, we sweep the input frequency for each neuron in a model trained on SHD and identify the realized discrete-time frequency that maximizes its magnitude response. Figure~\ref{fig:disc_ctdt_fidelity} shows the relation between the learned continuous-time target frequency and the realized discrete-time target frequency across neurons. In most cases the learned continuous-time target frequency closely matches the realized discrete-time target frequency, but the gap increases toward the high-frequency regime.

The gap observed in Figure~\ref{fig:disc_ctdt_fidelity} can be understood as a consequence of using semi-implicit Euler discretization rather than an exact discretization. Appendix~\ref{app:fs_ct_dt_map} accounts for this gap by deriving closed-form stationary candidates for the implemented discrete-time response. We denote this closed-form discrete-time estimate by $\widehat f^\star_\mathrm{DT}$.

As shown in Table~\ref{tab:ctdt_alignment}, $\widehat f^\star_\mathrm{DT}$ reduces the mean absolute error from $0.7605$\,Hz to $0.0019$\,Hz and the maximum error from $7.4020$\,Hz to $0.0038$\,Hz relative to $f^\star_\mathrm{DT}$. Thus, $f^\star_\mathrm{CT}$ remains practical for learning, while $\widehat f^\star_\mathrm{DT}$ provides a near-exact post-training interpretation of the implemented discrete-time neuron.

\vspace{-1.2em}

\input{tables/tab_CTDT}

\vspace{-1.2em}

%% file: tables/tab_abl_ssc.tex
\begin{table}[t]
\centering
\begin{minipage}{1.00\linewidth}
\centering
\footnotesize
\caption{\textbf{Factorized ablation on SSC across hidden sizes.} We compare baselines with FiTS variants initialized on a log-scale target frequency range through the inverse map. The FiTS rows separate frozen target frequencies, learnable target frequencies, and TS modules.}
\label{tab:ssc_hidden_size}
\setlength{\tabcolsep}{5pt}
\begin{tabular*}{\linewidth}{@{\extracolsep{\fill}}lcccc@{}}
\toprule
\multicolumn{1}{c}{\multirow{2}{*}[-0.5ex]{\textbf{Model}}} & \multicolumn{4}{c}{\textbf{Hidden Size}} \\
\cmidrule(lr){2-5}
& \textbf{64} & \textbf{128} & \textbf{256} & \textbf{512} \\
\midrule
\multicolumn{5}{l}{\textit{\textbf{Baselines}}} \\
\hspace{0.6em}Plain LIF
& $60.91 \pm 0.28\%$ & $65.91 \pm 0.16\%$ & $68.52 \pm 0.15\%$ & $69.17 \pm 0.18\%$ \\
\hspace{0.6em}Adapt. LIF (Frozen zero init.)
& $61.18 \pm 0.22\%$ & $65.96 \pm 0.14\%$ & $68.38 \pm 0.13\%$ & $68.88 \pm 0.13\%$ \\
\midrule
\addlinespace[3pt]
\multicolumn{5}{l}{\textit{\textbf{FiTS with log-scale target frequency init.}}} \\
\hspace{0.6em}FS (Frozen target frequency)
& $69.46 \pm 0.30\%$ & $73.75 \pm 0.15\%$ & $76.15 \pm 0.20\%$ & $77.13 \pm 0.22\%$ \\
\hspace{0.6em}FS (Learnable target frequency)
& $71.72 \pm 0.12\%$ & $74.94 \pm 0.16\%$ & $76.81 \pm 0.08\%$ & $77.78 \pm 0.15\%$ \\
\hspace{0.6em}FS + TS ($M\!=\!1$)
& $72.58 \pm 0.08\%$ & $75.69 \pm 0.12\%$ & $77.28 \pm 0.06\%$ & $78.22 \pm 0.12\%$ \\
\hspace{0.6em}FS + TS ($M\!=\!2$)
& $\mathbf{73.16 \pm 0.30\%}$ & $\mathbf{75.94 \pm 0.16\%}$ & $\mathbf{77.50 \pm 0.18\%}$ & $\mathbf{78.23 \pm 0.16\%}$ \\
\bottomrule
\end{tabular*}
\end{minipage}
\end{table}

%% file: tables/tab_theoretical_energy.tex
\begin{wraptable}{r}{0.46\linewidth}
\vspace{-12pt}
\centering
\footnotesize
\caption{\textbf{Estimated theoretical energy ($\mu\mathrm{J}$) on SSC at width 512.}
Values are computed from AC/MAC counts and firing rates.}
\label{tab:ssc_energy_breakdown_w512}
\setlength{\tabcolsep}{4.5pt}
\begin{tabular}{lccc}
\toprule
\textbf{Model} & $\mathbf{E_{\text{layer}}}$ & $\mathbf{E_{\text{neuron}}}$ & $\mathbf{E_{\text{total}}}$ \\
\midrule
Plain LIF       & 6.1 & 2.8  & 8.9 \\
Adapt. (Frozen zero init.)     & 8.5 & 4.5  & 13.0 \\
\midrule
FS (Frozen target freq.)      & 5.1 & 6.8  & 11.9 \\
FS (Learnable target freq.)    & 5.0 & 6.8  & 11.8 \\
FS+TS ($M=1$)   & 3.8 & 12.2 & 16.0 \\
FS+TS ($M=2$)   & 2.7 & 17.6 & 20.3 \\
\bottomrule
\end{tabular}
\vspace{-6pt}
\end{wraptable}

%% file: tables/tab_CTDT.tex
\begin{table}[!htbp]
\centering
\begin{minipage}{1.00\linewidth}
\centering
\caption{\textbf{CT-to-DT alignment.}
Errors are measured against the realized discrete-time target frequency.}
\label{tab:ctdt_alignment}
\footnotesize
\setlength{\tabcolsep}{6pt}
\renewcommand{\arraystretch}{1.18}
\begin{tabular*}{\linewidth}{@{}p{0.54\linewidth}@{\extracolsep{\fill}}rr@{}}
\toprule
\textbf{Comparison} & \textbf{MAE (Hz)} & \textbf{Max Err. (Hz)} \\
\midrule
CT target $f^\star_\mathrm{CT}$ vs.\ realized DT $f^\star_\mathrm{DT}$
& 0.7605 & 7.4020 \\
closed-form DT $\widehat{f}^\star_\mathrm{DT}$ vs.\ realized DT $f^\star_\mathrm{DT}$
& $\mathbf{0.0019}$ & $\mathbf{0.0038}$ \\
\bottomrule
\end{tabular*}
\end{minipage}
\end{table}

%% file: sections/conclusion.tex
\section{Conclusion}

We introduced FiTS, a spiking neuron that factorizes neuron-level temporal computation into Frequency Selectivity (FS) and Temporal Shaping (TS). The FS module parameterizes the target frequency that maximizes the subthreshold magnitude response. The TS module uses group-delay modulation to shape when frequency components contribute to membrane voltage before spike generation.

Across SHD, SSC, and GSC, FiTS consistently improves over a plain LIF baseline in feedforward networks without explicit recurrence or network-level delay mechanisms. The learned target frequencies and group-delay shifts show clear layer-dependent organization. The CT-to-DT analysis further shows that the continuous-time target frequency remains practical for learning while allowing accurate discrete-time interpretation after training. These results suggest that FiTS provides a useful neuron-level temporal parameterization with parameter-level summaries of learned frequency selectivity and group-delay modulation.

This work focuses primarily on auditory benchmarks and isolates neuron-level temporal computation from stronger network-level mechanisms. Appendix~\ref{app:longer_temporal} reports additional results on sMNIST and Dynamic Vision Sensor (DVS) Gesture 128~\cite{Amir_2017_CVPR}, but broader evaluation beyond auditory tasks and interactions with recurrence, network-level delays, and richer structural neuron models remain important next steps. Finally, the TS module introduces additional neuron-internal computation while highlighting the underexplored problem of learning when frequency-specific responses contribute to pre-spike membrane voltage accumulation. This motivates more efficient implementations and broader neuron-level timing mechanisms.

%% file: sections/appendix.tex
\renewcommand{\thefigure}{A.\arabic{figure}}
\setcounter{figure}{0} 
\renewcommand{\thetable}{A.\arabic{table}}
\setcounter{table}{0} 

\begin{center}
    \textbf{\LARGE Supplementary Material: FiTS}
\end{center}
\vspace{1em}

{\hypersetup{hidelinks}
\startcontents[sections]
\printcontents[sections]{l}{1}{\setcounter{tocdepth}{3}}
}

\clearpage

\appendix

\section{Theoretical Details}
\label{app:theory}

\subsection{Proof of Theorem~\ref{thm:target_frequency}}
\label{app:fs_ct_proof}

We prove Theorem~\ref{thm:target_frequency} by characterizing the nonzero stationary point of the magnitude response in~\eqref{eq:frequency_response}, and then solving the resulting relation for $\kappa$ to obtain the inverse parameterization in~\eqref{eq:kappa_inverse}.

Starting from~\eqref{eq:frequency_response}, the squared magnitude response is
\begin{equation}
|H(j\Omega)|^2 = \frac{\rho^2+\Omega^2}{(\mu\rho+\kappa-\Omega^2)^2+(\mu+\rho)^2\Omega^2}.
\label{eq:app_fs_mag_sq}
\end{equation}
Letting $x\!:=\!\Omega^2$ with $x\!\geq\!0$, define
\begin{equation*}
M(x):=|H(j\Omega)|^2=\frac{N(x)}{D(x)},
\end{equation*}
where $N(x)$ and $D(x)$ are the numerator and denominator of~\eqref{eq:app_fs_mag_sq}, respectively. A nonzero maximizer of $|H(j\Omega)|$ over $\Omega\!>\!0$ is equivalent to a maximizer of $M(x)$ over $x\!>\!0$. Differentiating gives
\begin{equation*}
M'(x)=\frac{N'(x)D(x)-N(x)D'(x)}{\{D(x)\}^2}.
\end{equation*}
Substituting the derivatives of $N$ and $D$ and simplifying yields
\begin{equation}
x^2+2\rho^2x-\Bigl[\kappa(2\rho^2+2\rho\mu+\kappa)-\rho^4\Bigr]=0.
\label{eq:app_fs_stationary_quad}
\end{equation}
Hence the unique nonnegative stationary point is
\begin{equation}
x^\star=-\rho^2+\sqrt{\kappa(2\rho^2+2\rho\mu+\kappa)}.
\label{eq:app_fs_xstar}
\end{equation}
It is strictly positive if and only if
\begin{equation*}
\sqrt{\kappa(2\rho^2+2\rho\mu+\kappa)}>\rho^2.
\end{equation*}

To establish uniqueness, observe that the numerator of $M'(x)$ is the negative of the quadratic polynomial in~\eqref{eq:app_fs_stationary_quad}. Since this polynomial is strictly increasing on $x\!\geq\!0$, it has at most one nonnegative root. Under the condition above, that root is $x^\star\!>\!0$. Therefore $M'(x)$ changes sign from positive to negative at $x^\star$, which shows that $x^\star$ is the unique global maximizer of $M(x)$ over $x\!>\!0$. Returning to $\Omega$, this gives exactly~\eqref{eq:omega_star_forward}.

Conversely, let a desired target frequency $\Omega^\star\!>\!0$ be given and set $x^\star\!=\!(\Omega^\star)^2$. Then~\eqref{eq:app_fs_xstar} implies
\begin{equation*}
x^\star+\rho^2=\sqrt{\kappa(2\rho^2+2\rho\mu+\kappa)}.
\end{equation*}
Squaring both sides gives
\begin{equation}
\kappa^2+2\rho(\rho+\mu)\kappa-(x^\star+\rho^2)^2=0.
\label{eq:app_fs_kappa_quad}
\end{equation}
Its unique positive solution is
\begin{equation*}
\kappa^\star = -\rho(\rho+\mu) + \sqrt{\rho^2(\rho+\mu)^2+(x^\star+\rho^2)^2}.
\end{equation*}
Substituting $x^\star\!=\!(\Omega^\star)^2$ and factoring out $\rho(\rho+\mu)$ yields
\begin{equation*}
\kappa^\star = \rho(\rho+\mu)\left[\sqrt{1+\frac{\left(1+(\Omega^\star/\rho)^2\right)^2}{(1+\mu/\rho)^2}}-1\right],
\end{equation*}
which is exactly~\eqref{eq:kappa_inverse}. Since the bracketed term is strictly positive for every $\Omega^\star\!>\!0$, we have $\kappa^\star\!>\!0$, and uniqueness follows from~\eqref{eq:app_fs_kappa_quad}.

Finally, setting $\kappa\!=\!0$ recovers the standard LIF case. Then~\eqref{eq:app_fs_xstar} gives $x^\star\!=\!-\rho^2\!<\!0$, and therefore the magnitude response is maximized only at $\Omega\!=\!0$. Thus the standard LIF neuron cannot realize a nonzero target frequency. \qed

\FloatBarrier

\subsection{Comparison of Semi-Implicit and Explicit Euler Discretization}
\label{app:semi_vs_explicit_euler}

We compare the semi-implicit Euler discretization used in Section~\ref{sec:fs_dt_realized} with the fully explicit Euler discretization of~\eqref{eq:adapt_lif}. We focus on zero-input subthreshold dynamics, since they determine whether a given target frequency initialization yields a stable discrete-time state.

With $I(t)\!\equiv\!0$, the fully explicit Euler discretization of~\eqref{eq:adapt_lif} gives
\begin{equation*}
\begin{bmatrix} V[k+1] \\ a[k+1] \end{bmatrix}
=
\begin{bmatrix} 1-\mu\Delta t & -\eta\Delta t \\ \gamma\Delta t & 1-\rho\Delta t \end{bmatrix}
\begin{bmatrix} V[k]\\ a[k] \end{bmatrix},
\end{equation*}
whereas the semi-implicit discretization used in~\eqref{eq:fs_updates_semiimplicit}, with $V[k+1]\!\equiv\!V_0[k+1]$, gives
\begin{equation*}
\begin{bmatrix} V[k+1]\\ a[k+1] \end{bmatrix}
=
\begin{bmatrix} 1-\mu\Delta t & -\eta\Delta t \\ \gamma\Delta t(1-\mu\Delta t) & 1-\rho\Delta t -\kappa\Delta t^2 \end{bmatrix} \begin{bmatrix} V[k]\\ a[k] \end{bmatrix}.
\end{equation*}

For either state-update matrix above, let
\begin{equation*}
\lambda^2 - T\lambda + D,
\end{equation*}
denote its characteristic polynomial. The second-order Jury stability criterion then requires
\begin{equation*}
1-T+D>0, \qquad 1+T+D>0, \qquad 1-D>0.
\end{equation*}

In our parameter regime, the active condition is the third inequality for the explicit discretization and the second inequality for the semi-implicit discretization. Solving these conditions for $\kappa$ yields
\begin{equation}
\kappa <
\begin{cases}
\dfrac{\mu+\rho}{\Delta t}-\mu\rho, & \text{(explicit)}\\[1.0ex]
\mu\rho+\dfrac{4}{\Delta t^2}-\dfrac{2(\mu+\rho)}{\Delta t}, & \text{(semi-implicit)}
\end{cases}
\label{eq:app_kappa_bounds}
\end{equation}

\setlength{\intextsep}{4pt}
\setlength{\columnsep}{8pt}
\begin{wrapfigure}{r}{0.38\linewidth}
\vspace{-5pt}
\captionsetup{justification=raggedright,singlelinecheck=false}
\caption{\textbf{Zero-input trajectories.}}
\centering
\includegraphics[width=\linewidth]{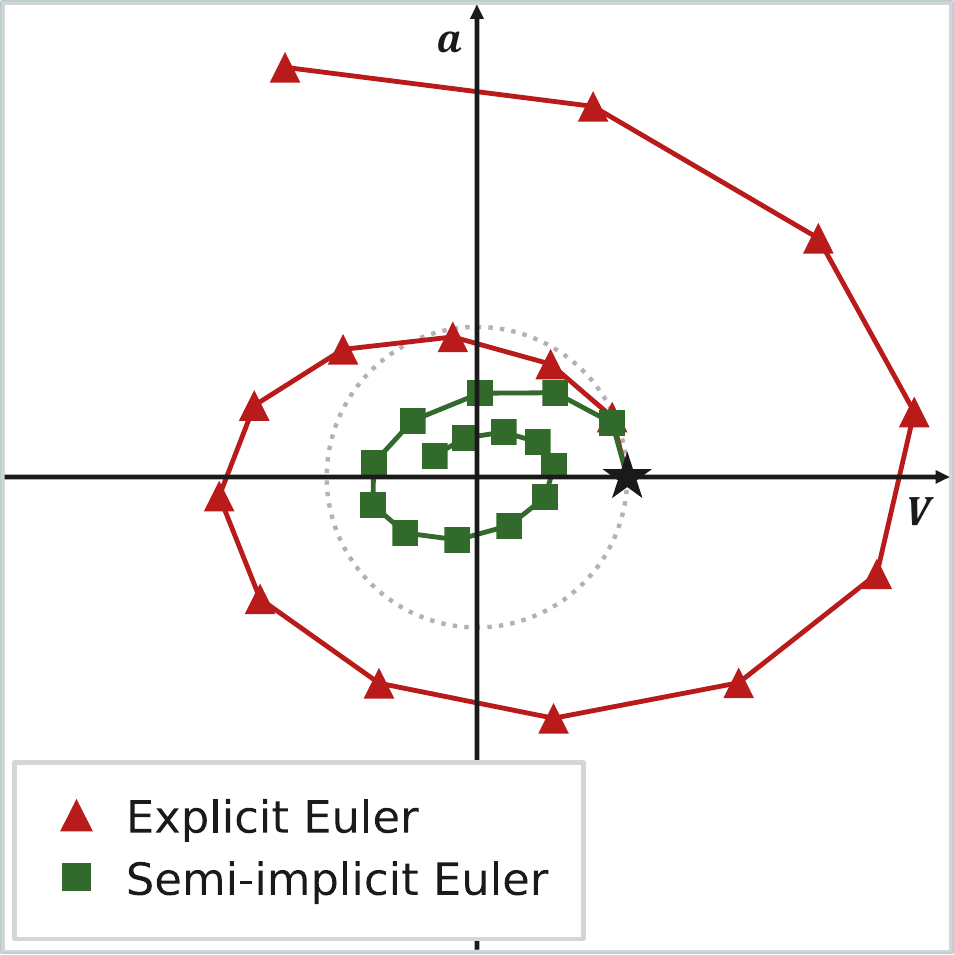}
\label{fig:appendix_stability}
\vspace{-14pt}
\end{wrapfigure}

In the SSC setting used in our experiments, $\tau_m\!=\!0.04$, $\tau_a\!=\!0.2$, and $\Delta t\!=\!0.004$. By Theorem~\ref{thm:target_frequency}, the target frequency increases monotonically with $\kappa$. Therefore, the stability bounds in~\eqref{eq:app_kappa_bounds} correspond to approximate target frequency limits of $13.8$\,Hz for the explicit discretization and $77.2$\,Hz for the semi-implicit discretization.

Figure~\ref{fig:appendix_stability} shows the different dynamical behavior induced by the two discretization methods. Under the same zero-input initialization, the explicit discretization drives the discrete state $(V,a)$ along an oscillatory divergent trajectory and across the reference unit circle in the phase plane, whereas the semi-implicit discretization shows a convergent trajectory. This is consistent with the wider stable target frequency range of the semi-implicit discretization. We therefore use the semi-implicit discretization throughout the paper.

\FloatBarrier

\subsection{Exact CT-to-DT Mapping for the FS Module}
\label{app:fs_ct_dt_map}

In this subsection, we derive the exact discrete-time transfer function induced by the semi-implicit Euler discretization~\eqref{eq:fs_updates_semiimplicit} and characterize the realized discrete-time target frequency $\omega^\star_{\mathrm{DT}}$, defined in Section~\ref{sec:fs_dt_realized} as the unique nonzero global maximizer of its magnitude response. Throughout, we interpret~\eqref{eq:fs_updates_semiimplicit} under $V[k+1]\!\equiv\!V_0[k+1]$.

\begin{theorem}[Closed-form stationary candidates for the realized discrete-time target frequency]
\label{thm:fs_ct_dt_exact}
Interpret~\eqref{eq:fs_updates_semiimplicit} under the subthreshold identification
$V[k+1]\!\equiv\!V_0[k+1]$, and define
\begin{equation*}
\bar{\mu}:=1-\mu\Delta t,\qquad
\bar{\rho}:=1-\rho\Delta t,\qquad
\bar{\kappa}:=\eta\gamma\Delta t^2.
\end{equation*}
Let $H_d(e^{j\omega})$ denote the discrete-time frequency response induced by~\eqref{eq:fs_updates_semiimplicit}. Then the stationary candidates of $|H_d(e^{j\omega})|$ over $\omega\!\in\!(0,\pi)$ are
\begin{equation}
x_\pm = \frac{1+\bar{\rho}^{\,2} \pm \sqrt{\frac{\bar{\kappa}}{\bar{\mu}}\Bigl((1-\bar{\rho}^{\,2})(1-\bar{\mu}\bar{\rho})+\bar{\kappa}\bar{\rho}\Bigr)}}{2\bar{\rho}},
\label{eq:app_fs_dt_stationary_roots_bar_theorem}
\end{equation}
with corresponding stationary frequencies
\begin{equation}
\omega_\pm=\arccos(x_\pm), \qquad \text{whenever } x_\pm\in(-1,1).
\label{eq:app_fs_dt_stationary_freqs_bar_theorem}
\end{equation}
\end{theorem}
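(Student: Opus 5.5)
The plan is to compute $H_d$ explicitly from the semi-implicit update, write $|H_d(e^{j\omega})|^2$ as a rational function of $x:=\cos\omega$, and reduce stationarity in $\omega$ to a quadratic in $x$. This mirrors the substitution $x=\Omega^2$ used in the proof of Theorem~\ref{thm:target_frequency}.

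\textbf{Step 1: transfer function.} Under $V[k+1]\equiv V_0[k+1]$ and zero initial conditions, I would $z$-transform~\eqref{eq:fs_updates_semiimplicit}. This gives
\[
zV=\bar\mu V-\eta\Delta t\,A+I,\qquad zA=\bar\rho A+\gamma\Delta t\,zV.
\]
The second relation gives $A=\gamma\Delta t\,zV/(z-\bar\rho)$. Substituting into the first yields
\[
H_d(z)=\frac{z-\bar\rho}{z^2-(\bar\mu+\bar\rho-\bar\kappa)z+\bar\mu\bar\rho}.
\]

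\textbf{Step 2: squared magnitude as a function of $x$.} On $z=e^{j\omega}$ the numerator satisfies $|e^{j\omega}-\bar\rho|^2=1+\bar\rho^2-2\bar\rho x$, which is linear in $x$. For the denominator, I divide by $z$ (which has unit modulus) and write $s:=\bar\mu+\bar\rho-\bar\kappa$. The modulus squared becomes
\[
D(x)=\bigl((1+\bar\mu\bar\rho)x-s\bigr)^2+(1-\bar\mu\bar\rho)^2(1-x^2),
\]
a quadratic $ax^2+bx+c$ with explicit coefficients. Thus $|H_d|^2=N(x)/D(x)$ with $N(x)=1+\bar\rho^2-2\bar\rho x$.

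\textbf{Step 3: stationarity condition.} Since $\frac{d}{d\omega}=-\sin\omega\,\frac{d}{dx}$ and $\sin\omega\neq0$ on $(0,\pi)$, the interior stationary points of $|H_d|$ are exactly the values $\omega=\arccos x$ with $x\in(-1,1)$ solving
\[
N'(x)D(x)-N(x)D'(x)=0.
\]
Expanding, the cubic terms cancel, and the condition is the quadratic
\[
2\bar\rho a x^2-2a(1+\bar\rho^2)x+\bigl(b(1+\bar\rho^2)-2\bar\rho c\bigr)\cdot(-1)=0,
\]
up to sign bookkeeping that I would recheck. The sum of its roots is $(1+\bar\rho^2)/\bar\rho$, which already matches the center of~\eqref{eq:app_fs_dt_stationary_roots_bar_theorem}.

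\textbf{Step 4: the discriminant (main obstacle).} The remaining work is to show that the discriminant, divided by $(2\bar\rho a)^2$, equals
\[
\frac{\bar\kappa}{\bar\mu}\Bigl((1-\bar\rho^2)(1-\bar\mu\bar\rho)+\bar\kappa\bar\rho\Bigr)\Big/(4\bar\rho^2).
\]
Here $a=(1+\bar\mu\bar\rho)^2-(1-\bar\mu\bar\rho)^2=4\bar\mu\bar\rho$, which accounts for the factor $1/\bar\mu$. This algebra is the step I expect to be hardest. I would substitute $s=\bar\mu+\bar\rho-\bar\kappa$ and organize everything as a polynomial in $\bar\kappa$.

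As a sanity check, at $\bar\kappa=0$ the expression should become a perfect square whose roots give the degenerate LIF case. Indeed, the factor $\bar\kappa$ in front shows that the roots collapse to the double value $(1+\bar\rho^2)/(2\bar\rho)$, which corresponds to the pole–zero cancellation at $z=\bar\rho$. Using that check, I would factor out $\bar\kappa$ and identify the remaining linear-in-$\bar\kappa$ factor.

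\textbf{Step 5: conclusion.} Applying the quadratic formula gives $x_\pm$, and mapping back via $\omega_\pm=\arccos x_\pm$ whenever $x_\pm\in(-1,1)$ gives~\eqref{eq:app_fs_dt_stationary_freqs_bar_theorem}.
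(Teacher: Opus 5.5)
Your route is the same as the paper's proof: the same $z$-transform to $H_d(z)=(z-\bar\rho)/\bigl(z^2-(\bar\mu+\bar\rho-\bar\kappa)z+\bar\mu\bar\rho\bigr)$, the substitution $x=\cos\omega$, and the quotient-rule reduction to a quadratic whose $x^2$ and $x$ coefficients do not depend on $\bar\kappa$; your Step~4 strategy (expand in $\bar\kappa$ and use the $\bar\kappa=0$ pole--zero cancellation) does go through. Two bookkeeping fixes are needed: the constant term is $-\bigl(b(1+\bar\rho^2)+2\bar\rho c\bigr)$, so $N'D-ND'$ simplifies exactly to $2\bigl[\bar\mu\bigl(2\bar\rho x-1-\bar\rho^2\bigr)^2-\bar\kappa\bigl((1-\bar\rho^2)(1-\bar\mu\bar\rho)+\bar\kappa\bar\rho\bigr)\bigr]$, which gives $x_\pm$ immediately, and your target expression is the discriminant divided by $(4\bar\rho a)^2$, not $(2\bar\rho a)^2$.
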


\begin{proof}
Applying the $z$-transform to~\eqref{eq:fs_updates_semiimplicit} under zero initial conditions, together with the subthreshold identification $V[k+1]\!\equiv\!V_0[k+1]$, yields
\begin{equation*}
H_d(z) = \frac{V(z)}{I(z)} = \frac{z-\bar{\rho}}{(z-\bar{\mu})(z-\bar{\rho})+\bar{\kappa}z}.
\end{equation*}
Evaluating on the unit circle gives
\begin{equation}
H_d(e^{j\omega}) = \frac{e^{j\omega}-\bar{\rho}}{(e^{j\omega}-\bar{\mu})(e^{j\omega}-\bar{\rho})+\bar{\kappa}e^{j\omega}}, \qquad \omega\in[0,\pi].
\label{eq:app_fs_dt_transfer_uc_bar}
\end{equation}

Let $x\!:=\!\cos\omega$. Then
\begin{equation}
M(x):=\left|H_d(e^{j\omega})\right|^2 = \frac{1+\bar{\rho}^2-2\bar{\rho}x}{1+(\bar{\kappa}-\bar{\mu}-\bar{\rho})^2+\bar{\mu}^2\bar{\rho}^2-2\bar{\mu}\bar{\rho} + 2(\bar{\kappa}-\bar{\mu}-\bar{\rho})(1+\bar{\mu}\bar{\rho})x+4\bar{\mu}\bar{\rho}x^2}.
\label{eq:app_fs_dt_Mx}
\end{equation}
Since $x\!=\!\cos\omega$ is strictly decreasing on $(0,\pi)$, maximizing $|H_d(e^{j\omega})|$ over $\omega\!\in\!(0,\pi)$ is equivalent to maximizing $M(x)$ over $x\!\in\!(-1,1)$. Hence any interior maximizer must satisfy $M'(x)\!=\!0$. Differentiating~\eqref{eq:app_fs_dt_Mx} with respect to $x$ and simplifying yields
\begin{equation}
4\bar{\mu}\bar{\rho}^2x^2-4\bar{\mu}\bar{\rho}(1+\bar{\rho}^2)x+\bar{\mu}(1+\bar{\rho}^2)^2-\bar{\kappa}\Bigl((1-\bar{\rho}^2)(1-\bar{\mu}\bar{\rho})+\bar{\kappa}\bar{\rho}\Bigr)=0.
\label{eq:app_fs_dt_stationary_quad_bar}
\end{equation}

Under $0\!<\!\bar{\mu}\!<\!1$, $0\!<\!\bar{\rho}\!<\!1$, and $\bar{\kappa}\!>\!0$, the factor $(1-\bar{\rho}^2)(1-\bar{\mu}\bar{\rho})+\bar{\kappa}\bar{\rho}$ is strictly positive. Therefore the discriminant of~\eqref{eq:app_fs_dt_stationary_quad_bar} is positive, and the stationary candidates are real. Solving~\eqref{eq:app_fs_dt_stationary_quad_bar} gives~\eqref{eq:app_fs_dt_stationary_roots_bar_theorem}, and~\eqref{eq:app_fs_dt_stationary_freqs_bar_theorem} follows by setting $\omega_\pm\!=\!\arccos(x_\pm)$ whenever $x_\pm\!\in\!(-1,1)$.
\end{proof}

Theorem~\ref{thm:fs_ct_dt_exact} provides a closed-form characterization of the stationary candidates of the realized discrete-time target frequency. Given a continuous-time target frequency, the induced $\bar{\kappa}$ defines the exact discrete-time frequency response. The realized discrete-time target frequency is then obtained by selecting the valid maximizer among the stationary candidates.

\FloatBarrier

\subsection{Proof of Proposition~\ref{prop:neg_delay}}
\label{app:ts_negative_delay_proof}

We consider the single-stage case, since the multi-stage TS module contains it by setting $\lambda_2\!=\!\cdots\!=\!\lambda_M\!=\!0$. Let
\begin{equation*}
A_1(e^{j\omega}) = e^{j\phi_{A_1}(\omega)}
\end{equation*}
denote the first-stage all-pass response, and let
\begin{equation}
G_1(e^{j\omega}) := (1-\lambda_1) + \lambda_1 A_1(e^{j\omega})
\label{eq:app_ts_single_stage_mix}
\end{equation}
be the corresponding single-stage $\lambda$-mixed response.

For pure all-pass composition, each stage contributes a nonnegative all-pass delay
\begin{equation*}
\tau_{A_1}(\omega):=-\frac{d}{d\omega}\phi_{A_1}(\omega)\geq 0.
\end{equation*}
Thus pure all-pass cascade composition can only produce nonnegative stage-wise group-delay shifts. Under $\lambda$-mixing, this restriction no longer holds.

Writing
\begin{equation*}
G_1(e^{j\omega}) = \bigl((1-\lambda_1)+\lambda_1\cos\phi_{A_1}(\omega)\bigr)+j\,\lambda_1\sin\phi_{A_1}(\omega),
\end{equation*}
the group delay of $G_1(e^{j\omega})$ is obtained by differentiating its phase. A direct calculation yields
\begin{equation}
\tau_{G_1}(\omega) = \frac{\lambda_1\bigl(\lambda_1+(1-\lambda_1)\cos\phi_{A_1}(\omega)\bigr)}{\lambda_1^2+(1-\lambda_1)^2+2\lambda_1(1-\lambda_1)\cos\phi_{A_1}(\omega)}\tau_{A_1}(\omega).
\label{eq:app_ts_single_stage_delay}
\end{equation}

The denominator in~\eqref{eq:app_ts_single_stage_delay} is
\begin{equation*}
\left|(1-\lambda_1)+\lambda_1 e^{j\phi_{A_1}(\omega)}\right|^2,
\end{equation*}
and is therefore nonnegative, with equality only in the degenerate cancellation case. Since $\lambda_1\!\in\!(0,1)$ and $\tau_{A_1}(\omega)\!\geq\!0$, the sign of $\tau_{G_1}(\omega)$ is determined by
\begin{equation*}
\lambda_1+(1-\lambda_1)\cos\phi_{A_1}(\omega).
\end{equation*}
Hence
\begin{equation}
\tau_{G_1}(\omega)<0 \quad \Longleftrightarrow \quad \lambda_1+(1-\lambda_1)\cos\phi_{A_1}(\omega)<0.
\label{eq:app_ts_neg_delay_sign}
\end{equation}
Solving this inequality yields the exact condition
\begin{equation}
\lambda_1<\frac{1}{2} \qquad \text{and} \qquad \cos\phi_{A_1}(\omega)<-\frac{\lambda_1}{1-\lambda_1}.
\label{eq:app_ts_neg_delay_condition}
\end{equation}

This proves the single-stage claim. The extension to any $M\!\geq\!1$ follows immediately by setting $\lambda_2\!=\!\cdots\!=\!\lambda_M\!=\!0$.

\FloatBarrier

\subsection{Algorithmic Form of the FiTS Update}
\label{app:algorithm}

Algorithm~\ref{alg:FiTS_update} summarizes the full discrete-time update of the FiTS neuron. The update proceeds in three stages: the FS module computes the subthreshold voltage and adaptation update, the TS module applies stage-wise all-pass filtering and $\lambda$-mixing, and the final mixed response is used for spike generation and soft reset.

\begin{algorithm}[H]
  \footnotesize
  \caption{Discrete-Time Update of the FiTS Neuron}
  \label{alg:FiTS_update}
  \renewcommand{\baselinestretch}{1.05}\selectfont
  \begin{algorithmic}[1]
    \Statex \textbf{Input:} $V[k],\, a[k],\, \{V_m[k]\}_{m=0}^{M},\, I[k]$
    \Statex \textbf{Output:} $V[k+1],\, a[k+1],\, \{V_m[k+1]\}_{m=0}^{M},\, S[k+1]$

    \Statex \colorbox{gray!12}{\makebox[\dimexpr\linewidth-2\fboxsep\relax][l]{\textbf{FS module}}}
    \State $V_0[k+1] \gets (1-\mu\Delta t)V[k] - \eta\Delta t\,a[k] + I[k]$
    \State $a[k+1] \gets (1-\rho\Delta t)a[k] + \gamma\Delta t\,V_0[k+1]$

    \Statex \colorbox{gray!12}{\makebox[\dimexpr\linewidth-2\fboxsep\relax][l]{\textbf{TS module}}}
    \State $\widetilde V_0[k+1] \gets V_0[k+1]$
    \For{$m=1,\dots,M$}
        \State $V_m[k+1] \gets \beta_m\bigl(V_m[k]-V_{m-1}[k+1]\bigr)+V_{m-1}[k]$
        \State $\widetilde V_m[k+1] \gets (1-\lambda_m)\widetilde V_{m-1}[k+1] + \lambda_m V_m[k+1]$
    \EndFor

    \Statex \colorbox{gray!12}{\makebox[\dimexpr\linewidth-2\fboxsep\relax][l]{\textbf{Spike Generation and Reset}}}
    \State $S[k+1] \gets \Theta(\widetilde V_M[k+1]-V_\mathrm{th})$
    \State $V[k+1] \gets \widetilde V_M[k+1]- S[k+1]V_\mathrm{th}$
  \end{algorithmic}
\end{algorithm}

\FloatBarrier

\subsection{Response-Level Target Frequency versus Intrinsic Pole Frequency}
\label{app:target_vs_pole}

This section clarifies the distinction between the FiTS target frequency and an intrinsic frequency defined from the homogeneous dynamics. Starting from the input-to-voltage response in~\eqref{eq:frequency_response}, the squared magnitude response is
\begin{equation}
|H(j\Omega)|^2 = \frac{\rho^2+\Omega^2}{(\mu\rho+\kappa-\Omega^2)^2+(\mu+\rho)^2\Omega^2}.
\label{eq:app_mag_response_sq}
\end{equation}

FiTS defines the target frequency as the nonzero maximizer of this input-to-voltage magnitude response. By~\eqref{eq:omega_star_forward}, this response-level target satisfies
\begin{equation}
(\Omega^\star)^2 =\sqrt{\kappa(2\rho^2+2\rho\mu+\kappa)}-\rho^2.
\label{eq:app_omega_star_sq}
\end{equation}

By contrast, an intrinsic pole frequency is determined by the homogeneous part of the same linear subsystem. The denominator of~\eqref{eq:frequency_response} gives the characteristic polynomial

\begin{equation}
s^2+(\mu+\rho)s+(\mu\rho+\kappa)=0.
\label{eq:app_char_poly}
\end{equation}
When the poles are complex, the magnitude of their imaginary part is
\begin{equation}
\Omega_{\mathrm{pole}} = \frac{1}{2}\sqrt{4(\mu\rho+\kappa)-(\mu+\rho)^2},
\label{eq:app_pole_frequency}
\end{equation}
and hence
\begin{equation}
\Omega_{\mathrm{pole}}^2 = \mu\rho+\kappa-\frac{(\mu+\rho)^2}{4}.
\label{eq:app_pole_frequency_sq}
\end{equation}

This quantity characterizes the natural mode of the homogeneous subsystem. It is not, in general, the same as the frequency at which input current most strongly drives membrane voltage.

The difference follows directly from Eqs.~\eqref{eq:app_omega_star_sq} and~\eqref{eq:app_pole_frequency_sq}. Equating the response-level target frequency and the pole-imaginary frequency would require

\begin{equation}
\sqrt{\kappa(2\rho^2+2\rho\mu+\kappa)}-\rho^2 = \mu\rho+\kappa-\frac{(\mu+\rho)^2}{4},
\label{eq:app_target_pole_constraint}
\end{equation}

which is not satisfied in general. The response-level target frequency depends on both the numerator and denominator of the input-to-voltage transfer function, whereas the imaginary part of the pole depends only on the homogeneous denominator.

This distinction also separates response-level frequency selectivity from intrinsic oscillation. A nonzero imaginary part of the pole exists only when
\begin{equation}
4(\mu\rho+\kappa)>(\mu+\rho)^2.
\label{eq:app_complex_pole_condition}
\end{equation}

However, this condition is not equivalent to the existence of a nonzero maximizer of $|H(j\Omega)|$. For example, with $\mu\!=\!4$, $\rho\!=\!1$, and $\kappa\!=\!2$, we have
\begin{equation*}
4(\mu\rho+\kappa)=24 < 25=(\mu+\rho)^2,
\end{equation*}
so the poles are real and $\Omega_{\mathrm{pole}}$ is not a real-valued intrinsic oscillation frequency. Nevertheless,~\eqref{eq:app_omega_star_sq} gives
\begin{equation*}
(\Omega^\star)^2=\sqrt{24}-1>0,
\end{equation*}
so the input-to-voltage response has a nonzero target frequency $\Omega^\star\!\approx\!1.98$. Thus, a neuron can exhibit a response-level frequency preference even when its homogeneous dynamics do not have an oscillatory eigenmode.

This response-level definition is relevant for spiking neurons because spike generation is determined by threshold crossing of the membrane voltage. FiTS therefore parameterizes the frequency at which input current most strongly contributes to membrane voltage, rather than only the natural frequency of the autonomous linear subsystem.

\FloatBarrier

\section{Experimental Setup and Reproducibility}
\label{app:setup}

\subsection{Additional experimental settings}
\label{app:additional_settings}

Table~\ref{tab:app_common_settings} reports the dataset-specific neuron constants used in the benchmark experiments. Shared training settings are given in the main text. For all datasets, we set the adaptation current scale ratio to $\eta/\gamma\!=\!1.0$. The TS parameters are initialized in unconstrained coordinates. We initialize the all-pass coordinate as $\hat{\beta}_m\!=\!0$, which gives $\beta_m\!=\!\tanh(\hat{\beta}_m)\!=\!0$. We initialize the mixing logit as $\hat{\lambda}_m\!=\!-3.0$, which gives $\lambda_m\!=\!\sigma(\hat{\lambda}_m)\!\approx\!0.047$. Thus, the TS module is initialized close to the direct FS pathway and learns the amount of all-pass mixing during training.

\input{tables/appendix/tab_settings}

For reproducibility, Table~\ref{tab:app_lr_dropout} reports the selected learning rate and dropout probability for each dataset, cascade order, and hidden width. These values correspond to the final configurations used in the benchmark experiments.

\input{tables/appendix/tab_lr_dropout}

\FloatBarrier

\subsection{Training time}
\label{app:training_time}

FiTS is evaluated in feedforward SNNs without recurrence or network-level delays. This allows layer-wise temporal processing. Once the spike sequence from the previous layer is available, the temporal update of the next layer can be evaluated over the full sequence. We use this property to implement the FiTS temporal update with a fused Triton kernel.

The fused implementation processes temporal chunks within each layer and combines the FS module update, TS module update, spike generation, soft reset, and backward pass. To isolate the implementation effect, we compare it with a PyTorch timestep loop under a setting with two hidden layers and SSC-like tensor shapes. This benchmark uses the same temporal update in both cases and changes only the implementation. The two implementations produce identical spike outputs in all tested settings.

\input{tables/appendix/tab_kernel_runtime}

Table~\ref{tab:app_kernel_runtime} shows that the fused Triton implementation reduces forward and backward runtime by $31\times$ to $135\times$ relative to the PyTorch timestep loop across the tested hidden widths and TS cascade orders. This speedup comes from evaluating the temporal update within a fused layer-wise implementation rather than changing the mathematical operations of the neuron model. These measurements are implementation-level runtime results. They are separate from the theoretical operation-level energy estimates in Table~\ref{tab:ssc_energy_breakdown_w512}, and should not be interpreted as speedups for architectures with inter-layer temporal recurrence.

Table~\ref{tab:app_training_time} reports the resulting per-run training times used in our benchmark experiments. Times are measured in minutes on a single NVIDIA RTX A5000 GPU and summarized by the median over completed runs. The measured times are similar across $M=0,1,2$ in the tested settings, indicating that the fused implementation keeps feedforward FiTS training practical when TS cascade orders are used.

\input{tables/appendix/tab_training_time}

\FloatBarrier

\section{Additional Results}
\label{app:additional_results}

\subsection{Additional Ablation}
\label{app:app_ablation}

\paragraph{Factorized ablations on GSC and SHD$^\ast$.}
Tables~\ref{tab:gsc_hidden_size} and~\ref{tab:shdstar_hidden_size} provide the full ablations for GSC and SHD$^\ast$. They follow the same factorized comparison as the SSC ablation in the main text. Log-scale initialization of target frequencies gives a large improvement over plain LIF on both benchmarks. The learnable FS module and TS module further refine this baseline, although the TS effect depends on the dataset, hidden width, and cascade order.

\input{tables/appendix/tab_abl_gsc}
\input{tables/appendix/tab_abl_shd}

\paragraph{Larger TS cascade orders on SSC.}
\label{app:ssc_cascade_order_sweep}

To test whether the gains from the TS module are explained by a larger cascade order alone, we extend the SSC ablation to $M\!=\!3,4,5$. Table~\ref{tab:app_ssc_m_sweep} reports the extended sweep at the selected hidden widths. This experiment varies only the TS cascade order while keeping the rest of the architecture and training protocol unchanged.

The TS module consistently improves over FS-only across all tested cascade orders. However, the improvement is not monotonic in $M$. Thus, the cascade order is better viewed as a hyperparameter controlling temporal shaping capacity rather than as a simple scaling axis. More systematic order selection and broader within-neuron mechanisms for shaping pre-spike membrane voltage accumulation remain useful directions for future work.

\input{tables/appendix/tab_abl_ssc_extend}

\FloatBarrier

\subsection{Target Frequency Perturbation}
\label{app:target_freq_perturb}

To examine whether the target frequencies learned by the FS module remain relevant to the trained computation, we perturb SSC checkpoints trained with the FS module only ($M\!=\!0$) at inference time. For each hidden width, we evaluate 5 independently trained checkpoints. We compare the original model with two perturbations. In the reset condition, we keep all feedforward weights and other parameters fixed and restore only the target frequency parameters to their initial log-scale values. In the shuffle condition, we randomly permute the target frequency parameters within each layer. This preserves the layer-wise target frequency distribution but breaks the assignment between target frequencies and individual neurons.

\input{tables/appendix/tab_target_freq_perturb}

Table~\ref{tab:app_target_freq_perturb} reports the perturbed accuracies and their changes relative to the original checkpoints. Resetting the target frequencies degrades accuracy at every hidden width. The learned target frequencies therefore remain active components of the trained computation after optimization. The degradation is larger at smaller widths, indicating that narrower networks are more sensitive to this perturbation. The shuffle condition causes a larger degradation and brings accuracy close to chance level across all widths, even though the layer-wise target frequency distribution is preserved. Thus, both the learned distribution and the assignment of target frequencies to individual neurons are coupled to the learned feedforward weights and the resulting spiking computation.

\FloatBarrier

\subsection{Learned Distributions of \texorpdfstring{$\beta$}{beta} and \texorpdfstring{$\lambda$}{lambda}}
\label{app:beta_lambda_dist}

To complement the layer-dependent organization analysis in Section~\ref{sec:disc_learned_org}, we visualize the learned TS parameters $\beta$ and $\lambda$ on GSC. Figure~\ref{fig:app_beta_lambda_dist} shows one representative trained checkpoint from the 20 random seeds analyzed in Section~\ref{sec:disc_learned_org}. This figure provides a qualitative view of the learned TS parameters rather than an additional aggregate statistic.

The learned $\beta$ values do not remain concentrated near zero after training. The two hidden layers form distinct distributions within the stable range $(-1,1)$, which indicates that the all-pass stage is used for group-delay modulation rather than acting as an identity pathway. The learned $\lambda$ values also do not collapse to a trivial endpoint. They occupy a nontrivial subset of $(0,1)$, with the second hidden layer placing substantial mass on intermediate mixing values. These distributions indicate that the TS module uses both the direct FS pathway and the all-pass pathway after training.

These checkpoint-level distributions are consistent with the group-delay patterns reported in Figure~\ref{fig:disc_learned_org}. The aggregate group-delay analysis in Section~\ref{sec:disc_learned_org} remains the main evidence for layer-dependent timing organization, while this figure shows how the underlying TS parameters are distributed in a representative trained model.

\begin{figure}[!htbp]
\centering
\resizebox{\linewidth}{!}{%
    \begin{tabular}{@{}c@{\hspace{0.03\linewidth}}c@{}}
        \includegraphics{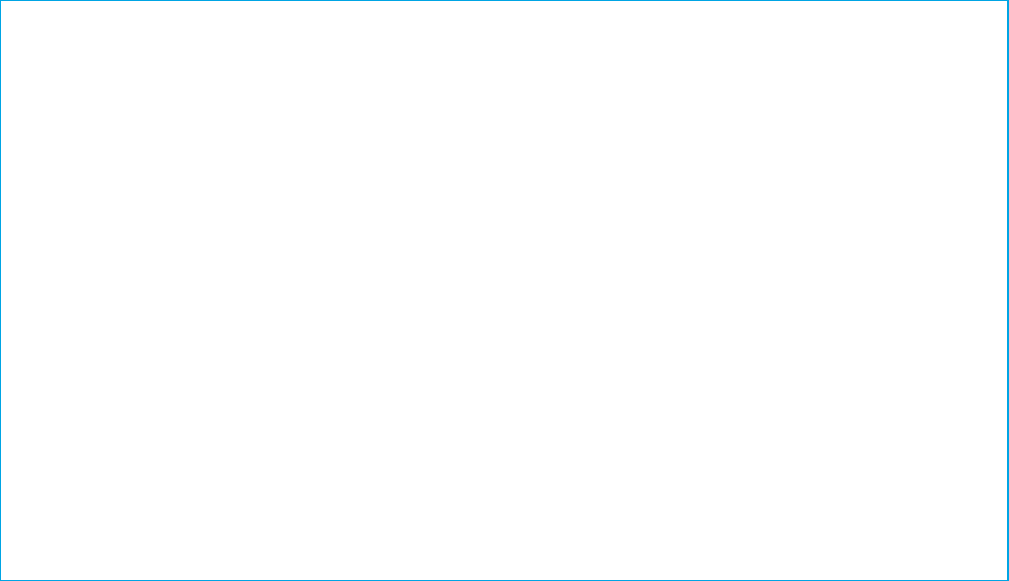} &
        \includegraphics{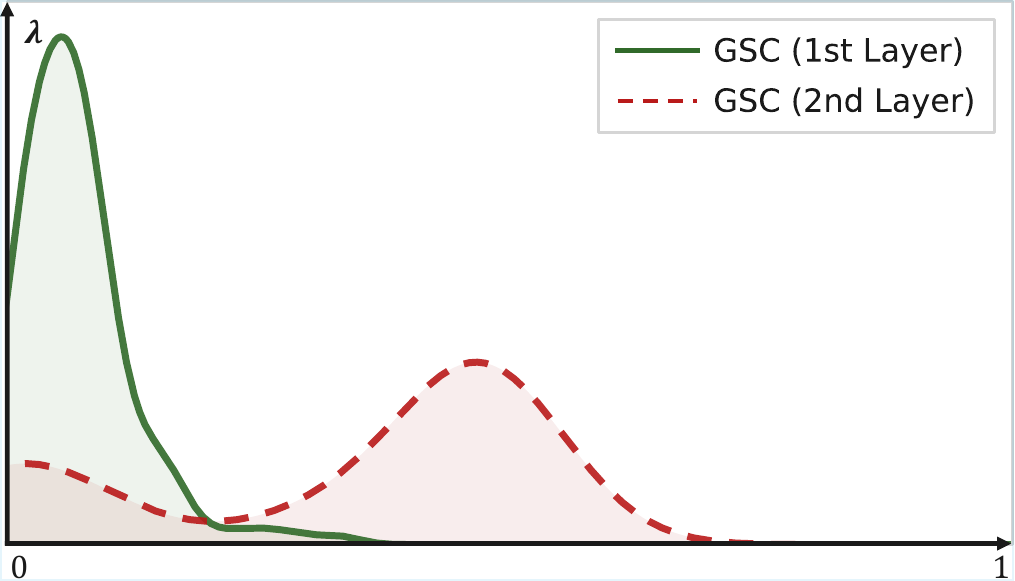}
    \end{tabular}%
}
\caption{\textbf{Representative learned distributions of TS parameters on GSC.}
The distributions are shown for one trained checkpoint selected from the 20 random seeds analyzed in Section~\ref{sec:disc_learned_org}. Left: learned all-pass parameters $\beta\!\in\!(-1,1)$. Right: learned mixing parameters $\lambda\!\in\!(0,1)$.}
\label{fig:app_beta_lambda_dist}
\end{figure}

\FloatBarrier

\subsection{Additional Temporal Benchmark Results}
\label{app:longer_temporal}

\paragraph{sMNIST.}
We include sequential MNIST (sMNIST) as an additional non-auditory evaluation. Table~\ref{tab:app_smnist} reports a representative FiTS result together with previously reported baselines. FiTS reaches $98.28\%$ accuracy with $0.07$\,M parameters using the $M\!=\!1$, width-256 model. This result is not intended to establish state-of-the-art performance on sMNIST, which primarily stresses long-range temporal credit assignment. Rather, it shows that the same neuron-level frequency selectivity and group-delay modulation can be applied outside the auditory benchmarks considered in the main text.

\input{tables/appendix/tab_smnist}

\paragraph{DVS Gesture 128.}
We also evaluate DVS Gesture 128~\cite{Amir_2017_CVPR} to test FiTS on non-auditory event-based input. The goal is not to compete with state-of-the-art DVS Gesture models, but to test whether FiTS improves over plain LIF within the same lightweight feedforward convolutional architecture. Events are converted into $T\!=\!50$ time bins with two polarity channels and resized to $64\!\times\!64$. The model has two convolutional spiking blocks, one fully connected spiking block, and a linear classifier. All variants use the same architecture and fixed training recipe, and differ only in the hidden-layer neuron model: plain LIF, FS-only ($M\!=\!0$), or FS+TS ($M\!=\!1,2$).

For convolutional layers, FiTS parameters are shared across spatial locations within each channel. Each channel learns one target frequency and, for TS variants, one $M$-stage set of $\beta$ and $\lambda$ parameters. The fully connected hidden layer uses unit-wise FiTS parameters. Since DVS Gesture 128 has no official validation split, we use the official train/test split and report final-epoch test accuracy across 5 random seeds under a fixed training schedule. With $T\!=\!50$ and $\Delta t\!=\!20$ ms, we restrict the target frequency range to $[0.5, 5]$\,Hz for stable FiTS updates.

\input{tables/appendix/tab_dvsgesture}
\vspace{+0.6em}
Table~\ref{tab:app_dvsgesture} shows that FiTS improves over plain LIF under the same lightweight architecture. FS-only improves accuracy by $2.71$ percentage points, and adding TS further improves the result to $90.35\%$. This provides an additional non-auditory event-based example where the proposed frequency and timing parameterization within each neuron is useful beyond the auditory benchmarks in the main text.

%% file: tables/appendix/tab_settings.tex
\begin{table}[!htbp]
\centering
\footnotesize
\begin{threeparttable}
\caption{\textbf{Dataset-specific neuron constants.}
The firing threshold and shared training settings are reported in the main text.}
\label{tab:app_common_settings}
\renewcommand{\arraystretch}{1.12}
\setlength{\tabcolsep}{6pt}
\begin{tabular*}{0.78\linewidth}{@{\extracolsep{\fill}}lccc@{}}
\toprule
\textbf{Setting} & \textbf{SHD} & \textbf{SSC} & \textbf{GSC} \\
\midrule
$\tau_m$ & $0.04\,\mathrm{s}$ & $0.04\,\mathrm{s}$ & $0.10\,\mathrm{s}$ \\
$\tau_a$ & $0.20\,\mathrm{s}$ & $0.20\,\mathrm{s}$ & $0.50\,\mathrm{s}$ \\
$\Delta t$ & $0.004\,\mathrm{s}$ & $0.004\,\mathrm{s}$ & $0.010\,\mathrm{s}$ \\
Target frequency range $f^\star$ & $[1,50]\,\mathrm{Hz}$ & $[1,50]\,\mathrm{Hz}$ & $[1,30]\,\mathrm{Hz}$ \\
\bottomrule
\end{tabular*}
\end{threeparttable}
\end{table}

%% file: tables/appendix/tab_lr_dropout.tex
\begin{table}[!htbp]
\centering
\scriptsize
\caption{\textbf{Selected learning rate and dropout.}
Each entry reports learning rate $lr$ and dropout probability $p$ by dataset,
cascade order, and hidden width.}
\label{tab:app_lr_dropout}
\renewcommand{\arraystretch}{1.12}
\setlength{\tabcolsep}{3.8pt}
\begin{tabular*}{\linewidth}{@{\extracolsep{\fill}}llcccc@{}}
\toprule
\textbf{Dataset} & \textbf{Order}
& $\mathbf{W=64}$ & $\mathbf{W=128}$ & $\mathbf{W=256}$ & $\mathbf{W=512}$ \\
\midrule

\multirow{3}{*}{SHD}
& $M=0$ & $lr=3\mathrm{e}{-3},\,p=0.5$ & $lr=6.5\mathrm{e}{-3},\,p=0.4$ & $lr=2\mathrm{e}{-3},\,p=0.3$ & $lr=1.5\mathrm{e}{-3},\,p=0.4$ \\
& $M=1$ & $lr=9\mathrm{e}{-3},\,p=0.2$ & $lr=7.5\mathrm{e}{-3},\,p=0.5$ & $lr=7.5\mathrm{e}{-4},\,p=0.2$ & $lr=2\mathrm{e}{-3},\,p=0.3$ \\
& $M=2$ & $lr=5.5\mathrm{e}{-3},\,p=0.6$ & $lr=6.5\mathrm{e}{-3},\,p=0.6$ & $lr=4.5\mathrm{e}{-3},\,p=0.3$ & $lr=2\mathrm{e}{-2},\,p=0.2$ \\

\midrule

\multirow{3}{*}{SSC}
& $M=0$ & $lr=2\mathrm{e}{-3},\,p=0.3$ & $lr=1.5\mathrm{e}{-3},\,p=0.3$ & $lr=6\mathrm{e}{-4},\,p=0.45$ & $lr=6\mathrm{e}{-4},\,p=0.55$ \\
& $M=1$ & $lr=1.25\mathrm{e}{-3},\,p=0.2$ & $lr=1\mathrm{e}{-3},\,p=0.35$ & $lr=5.5\mathrm{e}{-4},\,p=0.5$ & $lr=5.5\mathrm{e}{-4},\,p=0.6$ \\
& $M=2$ & $lr=3\mathrm{e}{-3},\,p=0.3$ & $lr=1.5\mathrm{e}{-3},\,p=0.45$ & $lr=8\mathrm{e}{-4},\,p=0.55$ & $lr=4.5\mathrm{e}{-4},\,p=0.55$ \\

\midrule

\multirow{3}{*}{GSC}
& $M=0$ & $lr=4\mathrm{e}{-3},\,p=0.15$ & $lr=2.5\mathrm{e}{-3},\,p=0.20$ & $lr=2.5\mathrm{e}{-3},\,p=0.3$ & $lr=9\mathrm{e}{-4},\,p=0.45$ \\
& $M=1$ & $lr=3.5\mathrm{e}{-3},\,p=0.2$ & $lr=3\mathrm{e}{-3},\,p=0.25$ & $lr=3\mathrm{e}{-3},\,p=0.4$ & $lr=3\mathrm{e}{-3},\,p=0.65$ \\
& $M=2$ & $lr=4\mathrm{e}{-3},\,p=0.2$ & $lr=3.5\mathrm{e}{-3},\,p=0.3$ & $lr=2\mathrm{e}{-3},\,p=0.55$ & $lr=1\mathrm{e}{-3},\,p=0.45$ \\

\bottomrule
\end{tabular*}
\end{table}

%% file: tables/appendix/tab_kernel_runtime.tex
\begin{table}[!htbp]
\centering
\footnotesize
\begin{threeparttable}
\begin{minipage}{1.00\linewidth}
\caption{\textbf{Runtime of the temporal FiTS update.}
We compare a PyTorch timestep loop with the fused Triton implementation under a setting with two hidden layers and SSC-like tensor shapes. Times include forward and backward passes and are reported as median milliseconds per iteration on a single NVIDIA RTX A5000 GPU. Bold values in parentheses report speedup over the PyTorch loop.}
\label{tab:app_kernel_runtime}

\setlength{\tabcolsep}{5.5pt}
\renewcommand{\arraystretch}{1.10}
\begin{tabular*}{\linewidth}{@{\extracolsep{\fill}}cccr@{}}
\toprule
$W$ & $M$ & PyTorch (ms) & Triton (ms) \\
\midrule
\multirow{3}{*}{64}
& $M=0$ & 426.7 & $8.3$ \textbf{\boldmath$(51.16{\times})$} \\
& $M=1$ & 627.1 & $7.3$ \textbf{\boldmath$(86.35{\times})$} \\
& $M=2$ & 861.8 & $6.7$ \textbf{\boldmath$(129.04{\times})$} \\
\midrule
\multirow{3}{*}{128}
& $M=0$ & 521.8 & $7.3$ \textbf{\boldmath$(71.13{\times})$} \\
& $M=1$ & 704.2 & $12.8$ \textbf{\boldmath$(54.81{\times})$} \\
& $M=2$ & 876.6 & $10.4$ \textbf{\boldmath$(84.35{\times})$} \\
\midrule
\multirow{3}{*}{256}
& $M=0$ & 537.8 & $17.3$ \textbf{\boldmath$(31.09{\times})$} \\
& $M=1$ & 933.3 & $6.9$ \textbf{\boldmath$(134.56{\times})$} \\
& $M=2$ & 1099.9 & $20.6$ \textbf{\boldmath$(53.30{\times})$} \\
\midrule
\multirow{3}{*}{512}
& $M=0$ & 1079.4 & $33.5$ \textbf{\boldmath$(32.20{\times})$} \\
& $M=1$ & 1385.9 & $25.0$ \textbf{\boldmath$(55.41{\times})$} \\
& $M=2$ & 1437.2 & $33.0$ \textbf{\boldmath$(43.51{\times})$} \\
\bottomrule
\end{tabular*}
\end{minipage}
\end{threeparttable}
\end{table}

%% file: tables/appendix/tab_training_time.tex
\begin{table}[!htbp]
\centering
\footnotesize
\begin{threeparttable}
\caption{\textbf{Training time across experimental settings.}
Entries are median per-run times in minutes on a single NVIDIA RTX A5000 GPU.}
\label{tab:app_training_time}
\setlength{\tabcolsep}{5.0pt}
\renewcommand{\arraystretch}{1.12}
\begin{tabular}{llcccc}
\toprule
\textbf{Dataset} & \textbf{Order}
& $\mathbf{W=64}$ & $\mathbf{W=128}$ & $\mathbf{W=256}$ & $\mathbf{W=512}$ \\
\midrule
\multirow{3}{*}{SHD}
& $M=0$ & 6.6 & 6.6 & 6.0 & 8.4 \\
& $M=1$ & 6.0 & 6.6 & 6.6 & 7.8 \\
& $M=2$ & 6.6 & 6.6 & 6.6 & 8.4 \\
\midrule
\multirow{3}{*}{SSC}
& $M=0$ & 51.0 & 50.4 & 52.2 & 68.4 \\
& $M=1$ & 51.6 & 53.4 & 52.2 & 69.0 \\
& $M=2$ & 50.4 & 54.0 & 52.8 & 69.0 \\
\midrule
\multirow{3}{*}{GSC}
& $M=0$ & 19.8 & 19.8 & 21.6 & 22.2 \\
& $M=1$ & 21.0 & 20.4 & 21.0 & 24.0 \\
& $M=2$ & 21.0 & 21.0 & 21.0 & 24.0 \\
\bottomrule
\end{tabular}
\end{threeparttable}
\end{table}

%% file: tables/appendix/tab_abl_gsc.tex
\begin{table}[!htbp]
\centering
\begin{minipage}{1.00\linewidth}
\centering
\footnotesize
\caption{\textbf{Factorized ablation on GSC across hidden sizes.} We compare baselines with FiTS variants initialized on a log-scale target frequency range through the inverse map. The FiTS rows separate frozen target frequencies, learnable target frequencies, and TS modules.}
\label{tab:gsc_hidden_size}
\setlength{\tabcolsep}{5pt}
\begin{tabular*}{\linewidth}{@{\extracolsep{\fill}}lcccc@{}}
\toprule
\multicolumn{1}{c}{\multirow{2}{*}[-0.5ex]{\textbf{Model}}} & \multicolumn{4}{c}{\textbf{Hidden Size}} \\
\cmidrule(lr){2-5}
& \textbf{64} & \textbf{128} & \textbf{256} & \textbf{512} \\
\midrule
\multicolumn{5}{l}{\textit{\textbf{Baselines}}} \\
\hspace{0.6em}Plain LIF
& $74.17 \pm 0.32\%$ & $78.02 \pm 0.18\%$ & $77.61 \pm 0.37\%$ & $80.22 \pm 0.16\%$ \\
\hspace{0.6em}Adapt. LIF (Frozen zero init.)
& $74.45 \pm 0.38\%$ & $78.05 \pm 0.25\%$ & $77.91 \pm 0.15\%$ & $79.89 \pm 0.19\%$ \\
\midrule
\addlinespace[3pt]
\multicolumn{5}{l}{\textit{\textbf{FiTS with log-scale target frequency init.}}} \\
\hspace{0.6em}FS (Frozen target frequency)
& $84.99 \pm 0.31\%$ & $89.72 \pm 0.15\%$ & $91.82 \pm 0.14\%$ & $92.50 \pm 0.15\%$ \\
\hspace{0.6em}FS (Learnable target frequency)
& $89.84 \pm 0.06\%$ & $92.01 \pm 0.10\%$ & $92.89 \pm 0.10\%$ & $93.27 \pm 0.12\%$ \\
\hspace{0.6em}FS + TS ($M\!=\!1$)
& $90.89 \pm 0.16\%$ & $92.95 \pm 0.10\%$ & $\mathbf{94.08 \pm 0.20\%}$ & $\mathbf{94.48 \pm 0.12\%}$ \\
\hspace{0.6em}FS + TS ($M\!=\!2$)
& $\mathbf{90.94 \pm 0.18\%}$ & $\mathbf{93.06 \pm 0.04\%}$ & $93.94 \pm 0.12\%$ & $93.96 \pm 0.14\%$ \\
\bottomrule
\end{tabular*}
\end{minipage}
\end{table}

%% file: tables/appendix/tab_abl_shd.tex
\begin{table}[!htbp]
\centering
\begin{minipage}{1.00\linewidth}
\centering
\footnotesize
\caption{\textbf{Factorized ablation on SHD$^\ast$ across hidden sizes.}
All results use the validation-split SHD protocol, where 20\% of the training set is held out for validation and test accuracy is reported at the best validation epoch. We compare baselines with FiTS variants initialized on a log-scale target frequency range through the inverse map. The FiTS rows separate frozen target frequencies, learnable target frequencies, and TS modules.}
\label{tab:shdstar_hidden_size}
\setlength{\tabcolsep}{5pt}
\begin{tabular*}{\linewidth}{@{\extracolsep{\fill}}lcccc@{}}
\toprule
\multicolumn{1}{c}{\multirow{2}{*}[-0.5ex]{\textbf{Model}}} &
\multicolumn{4}{c}{\textbf{Hidden Size}} \\
\cmidrule(lr){2-5}
& \textbf{64} & \textbf{128} & \textbf{256} & \textbf{512} \\
\midrule
\multicolumn{5}{l}{\textit{\textbf{Baselines}}} \\
\hspace{0.6em}Plain LIF
& $77.80 \pm 0.69\%$
& $78.82 \pm 0.78\%$
& $79.34 \pm 0.67\%$
& $79.70 \pm 0.35\%$ \\
\hspace{0.6em}Adapt. LIF (Frozen zero init.)
& $77.50 \pm 0.67\%$
& $78.16 \pm 0.84\%$
& $79.47 \pm 0.57\%$
& $79.47 \pm 0.36\%$ \\
\midrule
\addlinespace[3pt]
\multicolumn{5}{l}{\textit{\textbf{FiTS with log-scale target frequency init.}}} \\
\hspace{0.6em}FS (Frozen target frequency)
& $90.43 \pm 0.56\%$
& $91.55 \pm 0.39\%$
& $92.30 \pm 0.38\%$
& $92.30 \pm 0.35\%$ \\
\hspace{0.6em}FS (Learnable target frequency)
& $93.01 \pm 0.18\%$
& $93.70 \pm 0.14\%$
& $93.99 \pm 0.12\%$
& $94.02 \pm 0.21\%$ \\
\hspace{0.6em}FS + TS ($M\!=\!1$)
& $\mathbf{93.47 \pm 0.20\%}$
& $\mathbf{94.12 \pm 0.30\%}$
& $93.99 \pm 0.15\%$
& $\mathbf{94.07 \pm 0.11\%}$ \\
\hspace{0.6em}FS + TS ($M\!=\!2$)
& $93.25 \pm 0.38\%$
& $93.39 \pm 0.15\%$
& $\mathbf{94.38 \pm 0.12\%}$
& $94.04 \pm 0.24\%$ \\
\bottomrule
\end{tabular*}
\end{minipage}
\end{table}

%% file: tables/appendix/tab_abl_ssc_extend.tex
\begin{table}[!htbp]
\centering
\begin{minipage}{1.00\linewidth}
\centering
\footnotesize
\caption{\textbf{Extended TS cascade order sweep on SSC across hidden sizes.}
We extend the SSC ablation by evaluating larger TS cascade orders $M\!=\!3,4,5$. FiTS variants are initialized on a log-scale target frequency range through the inverse map. Results are reported as mean $\pm$ standard deviation across random seeds.}
\label{tab:app_ssc_m_sweep}
\setlength{\tabcolsep}{5pt}
\begin{tabular*}{\linewidth}{@{\extracolsep{\fill}}lcccc@{}}
\toprule
\multicolumn{1}{c}{\multirow{2}{*}[-0.5ex]{\textbf{Model}}} & \multicolumn{4}{c}{\textbf{Hidden Size}} \\
\cmidrule(lr){2-5}
& \textbf{64} & \textbf{128} & \textbf{256} & \textbf{512} \\
\midrule
\multicolumn{5}{l}{\textit{\textbf{Baselines}}} \\
\hspace{0.6em}Plain LIF
& $60.91 \pm 0.28\%$
& $65.91 \pm 0.16\%$
& $68.52 \pm 0.15\%$
& $69.17 \pm 0.18\%$ \\
\hspace{0.6em}Adapt. LIF (Frozen zero init.)
& $61.18 \pm 0.22\%$
& $65.96 \pm 0.14\%$
& $68.38 \pm 0.13\%$
& $68.88 \pm 0.13\%$ \\
\midrule
\addlinespace[3pt]
\multicolumn{5}{l}{\textit{\textbf{FiTS with log-scale target frequency init.}}} \\
\hspace{0.6em}FS (Frozen target frequency)
& $69.46 \pm 0.30\%$
& $73.75 \pm 0.15\%$
& $76.15 \pm 0.20\%$
& $77.13 \pm 0.22\%$ \\
\hspace{0.6em}FS (Learnable target frequency)
& $71.72 \pm 0.12\%$
& $74.94 \pm 0.16\%$
& $76.81 \pm 0.08\%$
& $77.78 \pm 0.15\%$ \\
\hspace{0.6em}FS + TS ($M\!=\!1$)
& $72.58 \pm 0.08\%$
& $75.69 \pm 0.12\%$
& $77.28 \pm 0.06\%$
& $78.22 \pm 0.12\%$ \\
\hspace{0.6em}FS + TS ($M\!=\!2$)
& $\mathbf{73.16 \pm 0.30\%}$
& $75.94 \pm 0.16\%$
& $\mathbf{77.50 \pm 0.18\%}$
& $78.23 \pm 0.16\%$ \\
\hspace{0.6em}FS + TS ($M\!=\!3$)
& $73.01 \pm 0.10\%$
& $75.93 \pm 0.13\%$
& $77.47 \pm 0.08\%$
& $78.23 \pm 0.15\%$ \\
\hspace{0.6em}FS + TS ($M\!=\!4$)
& $73.06 \pm 0.13\%$
& $\mathbf{75.97 \pm 0.21\%}$
& $77.47 \pm 0.16\%$
& $\mathbf{78.25 \pm 0.14\%}$ \\
\hspace{0.6em}FS + TS ($M\!=\!5$)
& $73.16 \pm 0.08\%$
& $75.87 \pm 0.05\%$
& $77.46 \pm 0.09\%$
& $78.12 \pm 0.10\%$ \\
\bottomrule
\end{tabular*}
\end{minipage}
\end{table}

%% file: tables/appendix/tab_target_freq_perturb.tex
\begin{table}[!htbp]
  \centering
  \footnotesize
  \caption{\textbf{Inference-time target frequency perturbation on SSC.}
  For each hidden width, we evaluate 5 FS-only ($M\!=\!0$) checkpoints.
  Each cell reports test accuracy, with the paired change from the corresponding unperturbed checkpoint shown in parentheses.}
  \label{tab:app_target_freq_perturb}
  \setlength{\tabcolsep}{5pt}
  \begin{tabular*}{\linewidth}{@{\extracolsep{\fill}}c c c c@{}}
    \toprule
    $W$
      & Full
      & Reset $f^\star$
      & Shuffle $f^\star$ \\
    \midrule
     64
      & $71.72 \pm 0.12\%$
      & $20.95 \pm 6.80\%$ \textbf{\boldmath($-50.78 \pm 6.78\%$)}
      & $3.69 \pm 1.13\%$ \textbf{\boldmath($-68.03 \pm 1.12\%$)} \\[2pt]
    128
      & $74.94 \pm 0.16\%$
      & $44.39 \pm 7.46\%$ \textbf{\boldmath($-30.55 \pm 7.51\%$)}
      & $3.49 \pm 1.12\%$ \textbf{\boldmath($-71.45 \pm 1.10\%$)} \\[2pt]
    256
      & $76.81 \pm 0.08\%$
      & $69.86 \pm 0.72\%$ \textbf{\boldmath($-6.95 \pm 0.74\%$)}
      & $4.20 \pm 0.48\%$ \textbf{\boldmath($-72.61 \pm 0.44\%$)} \\[2pt]
    512
      & $77.78 \pm 0.15\%$
      & $73.43 \pm 0.49\%$ \textbf{\boldmath($-4.34 \pm 0.43\%$)}
      & $3.59 \pm 0.34\%$ \textbf{\boldmath($-74.19 \pm 0.30\%$)} \\
    \bottomrule
  \end{tabular*}
\end{table}

%% file: tables/appendix/tab_smnist.tex
\begin{table}[!htbp]
\centering
\footnotesize
\begin{threeparttable}
\setlength{\tabcolsep}{8.5pt}
\renewcommand{\arraystretch}{1.08}
\caption{\textbf{sMNIST results.}
We compare FiTS with representative previously reported models on sequential
MNIST. Parameter counts are reported as given in the corresponding references.}
\label{tab:app_smnist}
\begin{tabular}{c l c c}
\toprule
\textbf{Dataset} & \textbf{Method} & \textbf{\# Params} & \textbf{Acc.} \\
\midrule
\multirow{6}{*}[-1.5ex]{sMNIST}
& TC-LIF~\cite{zhang2024tc}        & 0.09\,M & 97.35\% \\
& S5-RF~\cite{huber2024scaling}    & 0.04\,M & 98.89\% \\
& DH-SRNN~\cite{zheng2024dendritic} & 0.08\,M & 98.90\% \\
& D-RF~\cite{zhang2026dendritic}   & 0.16\,M & 99.50\% \\
& PMSN~\cite{chen2024pmsn}         & 0.16\,M & 99.53\% \\
& PRF~\cite{huang2024prf}          & 0.07\,M & 99.18\% \\
& \cellcolor{gray!15}\textbf{FiTS (ours, \(M=1\), \(W=256\))}
& \cellcolor{gray!15}\textbf{0.07\,M}
& \cellcolor{gray!15}\textbf{98.28\%} \\
\bottomrule
\end{tabular}
\end{threeparttable}
\end{table}

%% file: tables/appendix/tab_dvsgesture.tex
\begin{table}[!htbp]
\centering
\footnotesize
\begin{threeparttable}
\begin{minipage}{0.78\linewidth}
\caption{\textbf{Non-auditory event-based check on DVS Gesture 128.}
All models use the same lightweight feedforward convolutional architecture and the official train/test split. For convolutional layers, FiTS parameters are shared across spatial locations within each channel. We report final-epoch test accuracy across 5 random seeds under a fixed training schedule.}
\label{tab:app_dvsgesture}
\setlength{\tabcolsep}{6pt}
\renewcommand{\arraystretch}{1.10}
\begin{tabular*}{\linewidth}{@{\extracolsep{\fill}}lcc@{}}
\toprule
\textbf{Model} & \textbf{\# Params} & \textbf{Test Acc. (\%)} \\
\midrule
Plain LIF & $0.0726$M & $87.01 \pm 1.34$\% \\
FS-only ($M=0$) & $0.0728$M & $89.72 \pm 0.84$\% \\
FS+TS ($M=1$) & $0.0732$M & $90.00 \pm 0.86$\% \\
FS+TS ($M=2$) & $0.0735$M & $\mathbf{90.35 \pm 1.08\%}$ \\
\bottomrule
\end{tabular*}
\end{minipage}
\end{threeparttable}
\end{table}